\title{Improved Spectral-Norm Bounds for Clustering\thanks{An extended abstract of this work appears in APPROX-RANDOM 2012}
}
\author{Pranjal Awasthi and Or Sheffet\thanks{This work was supported in part by the National Science Foundation under grant CCF-0830540, IIS-1065251, and CCF-1116892 as well as by CyLab at Carnegie Mellon  under grants DAAD19-02-1-0389 and W911NF-09-1-0273 from the Army  Research Office.}\\
Carnegie Mellon University\\ 
{\tt \{pawasthi,osheffet\}@cs.cmu.edu}}
\date{}
\begin{document}

\begin{titlepage}
\maketitle

\thispagestyle{empty}
\begin{abstract}
Aiming to unify known results about clustering mixtures of distributions under separation conditions, Kumar and Kannan~\cite{KumarK10} introduced a \emph{deterministic} condition for clustering datasets. They showed that this single deterministic condition encompasses many previously studied clustering assumptions. More specifically, their \emph{proximity condition} requires that in the target $k$-clustering, the projection of a point $x$ onto the line joining its cluster center $\mu$ and some other center $\mu'$, is a large additive factor closer to $\mu$ than to $\mu'$. This additive factor can be roughly described as $k$ times the spectral norm of the matrix representing the differences between the given (known) dataset and the means of the (unknown) target clustering. Clearly, the proximity condition implies \emph{center separation}~-- the distance between any two centers must be as large as the above mentioned bound.


In this paper we improve upon the work of Kumar and Kannan~\cite{KumarK10} along several axes. First, we weaken the center separation bound by a factor of $\sqrt{k}$, and secondly we weaken the  proximity condition by a factor of $k$ (in other words, the revised separation condition is independent of $k$). Using these weaker bounds we still achieve the same guarantees when all points satisfy the proximity condition. Under the same weaker bounds, we achieve \emph{even better} guarantees when only $(1-\epsilon)$-fraction of the points satisfy the condition. Specifically, we correctly cluster all but a $(\epsilon +O(1/c^4))$-fraction of the points, compared to $O(k^2\epsilon)$-fraction of~\cite{KumarK10}, which is meaningful even in the particular setting when $\epsilon$ is a constant and $k = \omega(1)$. Most importantly, we greatly simplify the analysis of Kumar and Kannan. In fact, in the bulk of our analysis we ignore the proximity condition and use only center separation, along with the simple triangle and Markov inequalities. Yet these basic tools suffice to produce a clustering which (i) is correct on all but a constant fraction of the points, (ii) has $k$-means cost comparable to the $k$-means cost of the target clustering, and (iii) has centers very close to the target centers.

Our improved separation condition allows us to match the results of the Planted Partition Model of McSherry~\cite{McSherry01}, improve upon the results of Ostrovsky et al~\cite{Ostrovsky06}, and improve separation results for mixture of Gaussian models in a particular setting.
\end{abstract}

\end{titlepage}

\section{Introduction}
In the long-studied field of clustering, there has been substantial work~\cite{Dasgupta99, DasguptaS07, Arora01, VempalaWang02, AchlioptasM05, ChaudhuriR08, KannanSV08, DHKM07, BrubakerV08} studying the problem of clustering data from mixture of distributions under the assumption that the means of the distributions are sufficiently far apart. Each of these works focuses on one particular type (or family) of distribution, and devise an algorithm that successfully clusters datasets that come from that particular type. Typically, they show that w.h.p. such datasets have certain nice properties, then use these properties in the construction of the clustering algorithm.

The recent work of Kumar and Kannan~\cite{KumarK10} takes the opposite approach. First, they define a separation condition, deterministic and thus not tied to any distribution, and show that any set of data points satisfying this condition can be successfully clustered. Having established that, they show that many previously studied clustering problems indeed satisfy (w.h.p) this separation condition. These clustering problems include Gaussian mixture-models, the Planted Partition model of McSherry~\cite{McSherry01} and the work of Ostrovsky et al~\cite{Ostrovsky06}. In this aspect they aim to unify the existing body of work on clustering under separation assumptions, proving that one algorithm applies in multiple scenarios.\footnote{We comment that, implicitly, Achlioptas and McSherry~\cite{AchlioptasM05} follow a similar approach, yet they focus only on mixtures of Gaussians and log-concave distributions. Another deterministic condition for clustering was considered by~\cite{Amin06}, which generalized the Planted Partition Model of~\cite{McSherry01}.}  

However, the attempt to unify multiple clustering works is only successful in part. First, Kumar and Kannan's analysis is ``wasteful'' w.r.t the number of clusters $k$. Clearly, motivated by an underlying assumption that $k$ is constant, their separation bound has linear dependence in $k$ and their classification guarantee has quadratic dependence on $k$. As a result, Kumar and Kannan overshoot best known bounds for the Planted Partition Model and for mixture of Gaussians by a factor of $\sqrt k$. Similarly, the application to datasets considered by Ostrovsky et al only holds for constant $k$. Secondly, the analysis in Kumar-Kannan is far from simple -- it relies on most points being ``good'', and requires multiple iterations of Lloyd steps before converging to good centers. Our work addresses these issues.

To formally define the separation condition of~\cite{KumarK10}, we require some notation. Our input consists of $n$ points in $\mathbb{R}^d$.  We view our dataset as a $n \times d$ matrix, $A$, where each datapoint corresponds to a row $A_i$ in this matrix. We assume the existence of a target partition, $T_1, T_2, \ldots, T_k$, where each cluster's center is $\mu_r = \tfrac 1 {n_r} \sum_{i\in T_r} A_i$, where $n_r = |T_r|$. Thus, the target clustering is represented by a $n \times d$ matrix of cluster centers, $C$, where $C_i = \mu_r$ iff $i\in T_r$. Therefore, the $k$-means cost of this partition is the squared Frobenius norm $\|A-C\|_F^2$, but the focus of this paper is on the spectral ($L_2$) norm of the matrix $A-C$. Indeed, the deterministic equivalent of the maximal variance in any direction is, by definition, $\frac 1 n \|A-C\|^2 = \max_{\{v: \ \|v\|=1\}} \frac 1 n \|(A-C) v\|^2$. 

\newtheorem*{definition*}{Definition}
\begin{definition*}
\label{def:proximity_condition}
Fix $i\in T_r$. We say a datapoint $A_i$ satisfies the \emph{Kumar-Kannan proximity condition} if for any $s\neq r$, when projecting $A_i$ onto the line connecting $\mu_r$ and $\mu_s$, the projection of $A_i$ is closer to $\mu_r$ than to $\mu_s$ by an additive factor of 
$\Omega \left(k(\frac 1 {\sqrt{n_r}} + \frac 1 {\sqrt{n_s}}) \|A-C\|\right)$.
\end{definition*}

Kumar and Kannan proved that if all but at most $\epsilon$-fraction of the data points satisfy the proximity condition, they can find a clustering which is correct on all but an $O(k^2\epsilon)$-fraction of the points. In particular, when $\epsilon = 0$, their algorithm clusters all points correctly. Observe, the Kumar-Kannan proximity condition gives that the distance $\|\mu_r-\mu_s\|$ is also bigger than the above mentioned bound. The opposite also holds -- one can show that if $\|\mu_r-\mu_s\|$ is greater than this bound then only few of the points do not satisfy the proximity condition.

\subsection{Our Contribution}

\paragraph{Our Separation Condition.} In this work, the bulk of our analysis is based on the following quantitatively weaker version of the proximity condition, which we call \emph{center separation}. Formally, we define 
$
\Delta_{r} =  \frac 1 {\sqrt{n_r}} \min\{ \sqrt{k}\|A-C\|, \|A-C\|_F\}
$ and we assume throughout the paper that for a large constant\footnote{We comment that throughout the paper, and much like Kumar and Kannan, we think of $c$ as a large constant ($c=100$ will do). However, our results also hold when $c = \omega(1)$, allowing for a $(1+o(1))$-approximation. We also comment that we think of $d \gg k$, so one should expect $\|A-C\|_F^2 \geq k\|A-C\|^2$ to hold, thus the reader should think of $\Delta_r$ as dependent on $\sqrt{k}\|A-C\|$. Still, including the degenerate case, where $\|A-C\|_F^2 < k\|A-C\|$, simplifies our analysis in Section~\ref{scn:Point-wise-close}. One final comment is that (much like all the work in this field) we assume $k$ is given, as part of the input, and not unknown.} $c$ we have that the means of any two clusters $T_r$ and $T_s$ satisfy \begin{equation}\|\mu_r - \mu_s\| \geq c(\Delta_{r}+\Delta_s)\label{eq:means_separation}\end{equation} 
Observe that this is a simpler version of the Kumar-Kannan proximity condition, scaled down by a factor of $\sqrt k$. Even though we show that~\eqref{eq:means_separation} gives that only a few points do not satisfy the proximity condition, our analysis (for the most part) does not partition the dataset into good and bad points, based on satisfying or non-satisfying the proximity condition. Instead, our analysis relies on basic tools, such as the Markov inequality and the triangle inequality. In that sense one can view our work as ``aligning'' Kumar and Kannan's work with the rest of clustering-under-center-separation literature -- we show that the bulk of Kannan and Kumar's analysis can be simplified to rely merely on center-separation.

\paragraph{Our results.} We improve upon the results of~\cite{KumarK10} along several axes. In addition to the weaker condition of Equation~\eqref{eq:means_separation}, we also weaken the Kumar-Kannan proximity condition by a factor of $k$, and still retrieve the target clustering, if all points satisfy the ($k$-weaker) proximity condition. Secondly, if at most $\epsilon n$ points do not satisfy the $k$-weaker proximity condition, we show that we can correctly classify all but a $(\epsilon + O(1/c^4))$-fraction of the points, improving over the bound of~\cite{KumarK10} of $O(k^2\epsilon)$. Note that our bound is meaningful even if $\epsilon$ is a constant whereas $k = \omega(1)$. Furthermore, we prove that the $k$-means cost of the clustering we output is a $(1+O(1/c))$-approximation of the $k$-means cost of the target clustering.

Once we have improved on the main theorem of Kumar and Kannan, we derive immediate improvements on its applications. In Section~\ref{subsec:ORSS-improve} we show our analysis subsumes the work of Ostrovsky et al~\cite{Ostrovsky06}, and applies also to non-constant $k$. Using the fact that Equation~\eqref{eq:means_separation} ``shaves off'' a $\sqrt{k}$ factor from the separation condition of Kumar and Kannan, we obtain a separation condition of $\Omega(\sigma_{\max} \sqrt{k})$ for learning a mixture of Gaussians, and we also match the separation results of the Planted Partition model of McSherry~\cite{McSherry01}. These results are described in Section~\ref{sec:applications}.


From an approximation-algorithms perspective, it is clear why the case of $k = \omega(1)$ is of interest, considering the ubiquity of $k$-partition problems in TCS (e.g., $k$-Median, Max $k$-coverage, Knapsack for $k$ items, maximizing social welfare in $k$-items auction -- all trivially simple for constant $k$). In addition, we comment that in our setting only the case where $k=\omega(1)$ is of interest, since otherwise one can approximate the $k$-means cost using the PTAS of Kumar et al~\cite{Kumar04}, which doesn't even require any separation assumptions. From a practical point of view, there is a variety of applications where $k$ is quite large. This includes problems such as clustering images by who is in them, clustering protein sequences by families of organisms, and problems such as deduplication where multiple databases are combined and entries corresponding to the same true entity are to be clustered together~\cite{CohenR02, Murzin_Brenner_Hubbard_Chothia_1995}. The challenges that arise from treating $k$ as a non-constant are detailed in the proofs overview (Section~\ref{subsec:discussion}).

To formally detail our results, we first define some notations and discuss a few preliminary facts.

\subsection{Notations and Preliminaries}
\label{scn:preliminaries}

The Frobenius norm of a $n\times m$ matrix $M$, denoted as $\|M\|_F$ is defined as $\|M\|_F = \sqrt{\sum_{i,j} M_{i,j}^2}$. The spectral norm of $M$ is defined as $\|M\| = \max_{x: \|x\|=1} \|Mx\|$. It is a well known fact that if the rank of $M$ is $t$, then $\|M\|_F^2 \leq t\|M\|^2$. The Singular Value Decomposition~(SVD) of $M$ is a decomposition of $M$ as $M = U \Sigma V^T$, where $U$ is a $n \times n$ unitary matrix, $V$ is a $m \times m$ unitary matrix, $\Sigma$ is a $n \times m$ diagonal matrix whose entries are nonnegative real numbers, and its diagonal entries satisfy $\sigma_1 \geq \sigma_2 \geq  \ldots \geq \sigma_{\min\{m,n\}}$. The diagonal entries in $\Sigma$ are called the singular values of $M$, and the columns of $U$ and $V$, denoted $u_i$ and $v_i$ resp., are called the left- and right-singular vectors. As a convention, when referring to singular vectors, we mean the right-singular vectors. Observe that the Singular Value Decomposition allows us to write $M = \sum_{i = 1}^{\textrm{rank}(\Sigma)} \sigma_i u_i v_i^T$. Projecting $M$ onto its top $t$ singular vectors means taking $\hat{M} = \sum_{i=1}^t \sigma_i u_i v_i^T$. It is a known fact that for any $t$, the $t$-dimensional subspace which best fits the rows of $M$, is obtained by projecting $M$ onto the subspace spanned by the top $t$ singular vectors~(corresponding to the top $t$ singular values). Another way to phrase this result is by saying that $\hat{M} = \arg\min_{N: \textrm{rank}(N) = t} \{\|M-N\|_F\}$. For a proof, see \cite{KannanV09}. The same matrix, $\hat{M}$, also minimizes the spectral norm of this difference, meaning $\hat{M} = \arg\min_{N: \textrm{rank}(N) = t} \{\|M-N\|\}$ (see~\cite{Golub96} for proof). 

As previously defined, $\|A-C\|$ denotes the spectral norm of $A-C$. The target clustering, $\mathcal{T}$, is composed of $k$ clusters $T_1, T_2, \ldots, T_k$. Observe that we use $\mu$ as an operator, where for every set $X$, we have $\mu(X) = \tfrac 1 {|X|} \sum_{i\in X} A_i$. We abbreviate, and denote $\mu_r = \mu(T_r)$. From this point on, we denote the projection of $A$ onto the subspace spanned by its top $k$-singular vectors as $\hat{A}$, and for any vector $v$, we denote $\hat v$ as the projection of $v$ onto this subspace. Throughout the paper, we abuse notation and use $i$ to iterate over the rows of $A$, whereas $r$ and $s$ are used to iterate over \emph{clusters} (or submatrices). So $A_i$ represents the $i$th row of $A$ whereas $A_r$ represents the submatrix $[A_i]_{\{i \in T_r\}}$. 

\paragraph{Basic Facts.} The analysis of our main theorem makes use of the following facts, from~\cite{McSherry01, KannanV09, KumarK10}. We advise the reader to go over the proofs, which are short, elegant, and provided in  Appendix~\ref{appx_sec:existing_thms}.
The first fact bounds the cost of assigning the points of $\hat{A}$ to their original centers.
\begin{fact} [Lemma 9 from~\cite{McSherry01}]\ \ 
\label{fct:hatA_vs_C}
$\|\hat{A} - C\|_F^2 \leq 8 \min\{k \|A-C\|^2, \|A-C\|_F^2\} \ \ \biggl( = 8 n_r \Delta_r^2\  \textrm{ for every } r\biggr)$.
\end{fact}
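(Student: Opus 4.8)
\textbf{Proof proposal for Fact~\ref{fct:hatA_vs_C}.} The plan is to exploit two facts recalled in the preliminaries: first, that $\hat{A}$ is simultaneously the best rank-$k$ approximation of $A$ in both the Frobenius norm and the spectral norm; and second, that $C$ has rank at most $k$, since it has only $k$ distinct rows. I would then bound $\|\hat A - C\|$ in each norm separately and take the minimum.

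For the Frobenius bound, I would apply the triangle inequality $\|\hat A - C\|_F \le \|\hat A - A\|_F + \|A - C\|_F$, and then observe that since $C$ is a competitor of rank $\le k$ while $\hat A$ is the optimal rank-$k$ approximation in Frobenius norm, we have $\|\hat A - A\|_F \le \|A - C\|_F$. Hence $\|\hat A - C\|_F \le 2\|A-C\|_F$, i.e.\ $\|\hat A - C\|_F^2 \le 4\|A - C\|_F^2$. For the spectral bound, I would note that $\hat A - C$ is the difference of a rank-$\le k$ matrix and a rank-$\le k$ matrix, so $\mathrm{rank}(\hat A - C) \le 2k$, and therefore $\|\hat A - C\|_F^2 \le 2k\,\|\hat A - C\|^2$ by the rank/norm inequality recalled in Section~\ref{scn:preliminaries}. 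Now the same triangle-inequality-plus-optimality argument, this time in the spectral norm (where $\hat A$ is again optimal among rank-$k$ matrices), gives $\|\hat A - C\| \le \|\hat A - A\| + \|A - C\| \le 2\|A - C\|$, so $\|\hat A - C\|_F^2 \le 2k \cdot 4\|A-C\|^2 = 8k\|A-C\|^2$.

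Combining the two bounds yields $\|\hat A - C\|_F^2 \le \min\{8k\|A-C\|^2,\ 4\|A-C\|_F^2\} \le 8\min\{k\|A-C\|^2,\ \|A-C\|_F^2\}$, which is the claimed inequality. Finally, the parenthetical equality is just unwinding the definition of $\Delta_r$: since $\Delta_r = \tfrac{1}{\sqrt{n_r}}\min\{\sqrt k\,\|A-C\|,\ \|A-C\|_F\}$, we get $n_r \Delta_r^2 = \min\{k\|A-C\|^2,\ \|A-C\|_F^2\}$ for every $r$, so the right-hand side equals $8 n_r \Delta_r^2$ regardless of which cluster index $r$ we pick.

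There is no serious obstacle here; the only point requiring care is to correctly invoke the optimality of $\hat A$ \emph{in the appropriate norm} in each of the two cases (the paper has recorded both optimality statements, citing \cite{KannanV09} for Frobenius and \cite{Golub96} for spectral), and to get the rank bound $\mathrm{rank}(\hat A - C) \le 2k$ right so that the constant comes out as $8$ rather than something larger.
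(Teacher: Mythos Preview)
Your proof is correct and essentially identical to the paper's own argument: the paper too uses $\mathrm{rank}(\hat A - C)\le 2k$ together with the triangle inequality and the optimality of $\hat A$ in the spectral norm to get $8k\|A-C\|^2$, and the triangle inequality plus Frobenius-norm optimality of $\hat A$ to get $4\|A-C\|_F^2$. There is nothing to add.
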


Next, we show that we can match each target center $\mu_r$ to a unique, relatively close, center $\nu_r$ that we get in Part I of the algorithm.
\begin{fact} [Claim 1 in Section 3.2 of~\cite{KannanV09}]
\label{fct:matching_true_and_SVD_centers}
For every $\mu_r$ there exists a center $\nu_s$ s.t.
$\|\mu_r - \nu_s \| \leq 6 \Delta_r$, so we can match each $\mu_r$ to a unique $\nu_r$.
\end{fact}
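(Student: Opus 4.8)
The plan is to use Fact~\ref{fct:hatA_vs_C} together with the center-separation assumption~\eqref{eq:means_separation} to argue, first, that every target center $\mu_r$ has \emph{some} Part-I center $\nu_s$ within distance $6\Delta_r$, and then that the matching $\mu_r \mapsto \nu_s$ so obtained is injective, which (since there are $k$ of each) makes it a bijection. For the existence half, I would work inside the rank-$k$ subspace where $\hat A$ lives. The key quantitative input is that $\|\hat A_r - \mu_r \mathbf{1}\|_F^2 \le \|\hat A - C\|_F^2 \le 8 n_r \Delta_r^2$ by Fact~\ref{fct:hatA_vs_C}, so by a Markov/averaging argument the ``typical'' row $\hat A_i$ with $i \in T_r$ is within $O(\Delta_r)$ of $\mu_r$; more precisely, at least (say) half the points of $T_r$ satisfy $\|\hat A_i - \mu_r\| \le 4\Delta_r$. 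I would then invoke whatever guarantee Part~I of the algorithm provides about the $\nu_s$'s — namely that running the clustering/$k$-means subroutine on $\hat A$ returns centers whose induced cost is within a constant factor of optimal — to deduce that one of these good points of $T_r$ is assigned to a center $\nu_s$ with $\|\hat A_i - \nu_s\| \le$ (const)$\cdot\Delta_r$, and combine the two bounds via the triangle inequality to get $\|\mu_r - \nu_s\| \le 6\Delta_r$.

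For the injectivity half, suppose two distinct target centers $\mu_r$ and $\mu_{r'}$ were both matched to the same $\nu_s$. Then the triangle inequality gives $\|\mu_r - \mu_{r'}\| \le \|\mu_r - \nu_s\| + \|\nu_s - \mu_{r'}\| \le 6\Delta_r + 6\Delta_{r'} \le 6(\Delta_r + \Delta_{r'})$. But the center-separation hypothesis~\eqref{eq:means_separation} says $\|\mu_r - \mu_{r'}\| \ge c(\Delta_r + \Delta_{r'})$ with $c$ a large constant ($c \ge 100$), a contradiction. Hence the map is injective, and by counting it is a bijection, so we may relabel and call the partner of $\mu_r$ simply $\nu_r$, with $\|\mu_r - \nu_r\| \le 6\Delta_r$ as claimed.

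The main obstacle I anticipate is pinning down the precise property of the Part-I centers $\nu_s$ that licenses the step ``some good point of $T_r$ is close to some $\nu_s$.'' The clean constant $6$ in the statement suggests the argument is a tight-ish packing/pigeonhole argument rather than a loose one, so I would need the exact cost guarantee of the Part-I subroutine (e.g.\ a $10$-approximation or similar to $k$-means on $\hat A$) and would track constants carefully through the two triangle-inequality applications and the Markov bound to land exactly on $6\Delta_r$; a cruder bookkeeping would still give $O(\Delta_r)$, which is all the rest of the paper really needs, but matching the stated constant is where the care goes. Everything else is routine, and indeed this is cited as a claim from~\cite{KannanV09}, so I would expect the authors' proof to follow essentially this outline.
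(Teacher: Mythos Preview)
Your proposal is correct in outline and would deliver an $O(\Delta_r)$ bound, but the route differs from the paper's, and the difference is precisely what makes the constant $6$ fall out cleanly.

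The paper does \emph{not} pass through an intermediate ``good point'' of $T_r$. Instead it argues by direct contradiction on the approximation guarantee: the optimal $k$-means cost on $\hat A$ is at most $\|\hat A - \hat C\|_F^2 \le n_r\Delta_r^2$, so the $10$-approximation returns centers with total cost at most $10\,n_r\Delta_r^2$. If every $\nu_s$ were farther than $6\Delta_r$ from $\mu_r$, then using the elementary inequality $\|a-b\|^2 \ge \tfrac12\|a\|^2 - \|b\|^2$ termwise gives
\[
\sum_{i\in T_r}\|\hat A_i - \nu_{c(i)}\|^2 \;>\; \tfrac{n_r}{2}(6\Delta_r)^2 \;-\; \sum_{i\in T_r}\|\hat A_i - \mu_r\|^2 \;\ge\; 18\,n_r\Delta_r^2 - 8\,n_r\Delta_r^2 \;=\; 10\,n_r\Delta_r^2,
\]
a contradiction. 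So the constant $6$ is exactly the number that makes $\tfrac12\cdot 6^2 - 8 = 10$ match the approximation factor. Your two-stage Markov/pigeonhole argument (half the points within $4\Delta_r$ of $\mu_r$, then one of those within $\sqrt{20}\,\Delta_r$ of some $\nu_s$) would instead yield roughly $4+\sqrt{20}\approx 8.5$, and you correctly flag that squeezing it down to $6$ is where the work would be; the paper's one-shot inequality simply sidesteps that squeeze.

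Your injectivity half is exactly right and is what the paper relies on (implicitly): if $\mu_r$ and $\mu_{r'}$ were both matched to the same $\nu_s$, then $\|\mu_r-\mu_{r'}\|\le 6(\Delta_r+\Delta_{r'})$, contradicting~\eqref{eq:means_separation} with $c$ large.
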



Finally, we exhibit the following fact, which is detailed in the analysis of~\cite{KumarK10}.
\begin{fact}
\label{fct:dist_centers_depends_on_pts}
Fix a target cluster $T_r$ and let $S_r$ be a set of points created by removing $\rho_{out}n_r$ points from $T_r$ and adding $\rho_{in}(s)n_r$ points from each cluster $s\neq r$, s.t. every added point $x$ satisfies $\|x-\mu_s\| \geq \tfrac 2 3 \|x-\mu_r\|$. Assume $\rho_{out} < \tfrac 1 4$ and $\rho_{in} \stackrel{\rm def} = \sum_{s\neq r} \rho_{in}(s) < \tfrac 1 4$. Then \[ \|\mu(S_r)-\mu_r\| \leq \frac 1 {\sqrt{n_r}} \left( \sqrt{\rho_{out}} + \tfrac 3 2 \sum_{s\neq r} \sqrt{\rho_{in}(s)} \right) \|A-C\| \leq \left(\sqrt{\frac {\rho_{out}} {n_r}}+ \tfrac 3 2\sqrt{k} \sqrt{\frac {\rho_{in}} {n_r}}\right) \|A-C\|\]
\end{fact}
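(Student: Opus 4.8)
The plan is to work directly with $\mu(S_r)-\mu_r=\frac{1}{|S_r|}\sum_{i\in S_r}(A_i-\mu_r)$ and to split the sum according to the provenance of the points of $S_r$. Write $S_r=(T_r\setminus O)\cup\bigcup_{s\neq r}I_s$, where $O\subseteq T_r$ is the removed set ($|O|=\rho_{out}n_r$) and $I_s\subseteq T_s$ is the block imported from cluster $s$ ($|I_s|=\rho_{in}(s)n_r$). The structural point is that $\sum_{i\in T_r}(A_i-\mu_r)=0$ by definition of $\mu_r$, so $\sum_{i\in T_r\setminus O}(A_i-\mu_r)=-\sum_{i\in O}(A-C)_i$: almost all of $T_r$ cancels, and only the removed rows of $A-C$ remain. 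Hence
\[
\mu(S_r)-\mu_r \;=\; \frac{1}{|S_r|}\left(-\sum_{i\in O}(A-C)_i \;+\; \sum_{s\neq r}\sum_{i\in I_s}(A_i-\mu_r)\right),
\]
and since $|S_r|=n_r(1-\rho_{out}+\rho_{in})$ lies between $\tfrac{3}{4}n_r$ and $\tfrac{5}{4}n_r$ (because $\rho_{out},\rho_{in}<\tfrac{1}{4}$), it suffices to bound $\bigl\|\sum_{i\in O}(A-C)_i\bigr\|$ by $\sqrt{\rho_{out}n_r}\,\|A-C\|$ and $\bigl\|\sum_{i\in I_s}(A_i-\mu_r)\bigr\|$ by $\tfrac{3}{2}\sqrt{\rho_{in}(s)n_r}\,\|A-C\|$ for each $s$, and then divide by $|S_r|=\Theta(n_r)$.

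The removed points are easy: $\sum_{i\in O}(A-C)_i=\mathbf 1^{T}(A-C)_O$, where $(A-C)_O$ is the $|O|\times d$ submatrix of $A-C$ on the rows indexed by $O$; since $\|\mathbf 1\|=\sqrt{|O|}$ and a submatrix of $A-C$ has spectral norm at most $\|A-C\|$, we get $\bigl\|\sum_{i\in O}(A-C)_i\bigr\|\le\sqrt{|O|}\,\|A-C\|=\sqrt{\rho_{out}n_r}\,\|A-C\|$, which is exactly the $\sqrt{\rho_{out}}$ term.

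For an imported block $I_s$ I would write $A_i-\mu_r=(A_i-\mu_s)+(\mu_s-\mu_r)$ and handle the two pieces separately. The ``centered'' piece $\sum_{i\in I_s}(A_i-\mu_s)=\mathbf 1^{T}(A-C)_{I_s}$ is again $\mathbf 1^{T}$ times a submatrix of $A-C$ (the rows of $I_s$ all lie in $T_s$), so it has norm at most $\sqrt{|I_s|}\,\|A-C\|$. The ``drift'' piece is $|I_s|(\mu_s-\mu_r)$, and this is where the hypothesis on the imported points is used: for every $x\in I_s$ the triangle inequality upgrades $\|x-\mu_s\|\ge\tfrac{2}{3}\|x-\mu_r\|$ to $\|x-\mu_s\|\ge\tfrac{2}{5}\|\mu_s-\mu_r\|$, and, being drawn toward $\mu_r$, the vectors $A_i-\mu_s$ all carry a component of length $\gtrsim\|\mu_s-\mu_r\|$ along one common direction (from $\mu_s$ toward $\mu_r$). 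Because those components are aligned they pin down a full direction of the submatrix $(A-C)_{I_s}$, which yields the \emph{spectral} (not merely Frobenius) bound $\|A-C\|\ge\|(A-C)_{I_s}\|\gtrsim\sqrt{|I_s|}\,\|\mu_s-\mu_r\|$, hence $|I_s|\,\|\mu_s-\mu_r\|\lesssim\sqrt{|I_s|}\,\|A-C\|$. Finally, the drift piece points \emph{opposite} to the centered piece along the $\mu_s$-$\mu_r$ direction, so the two partially cancel instead of adding; carefully accounting for that cancellation is what sharpens the crude triangle-inequality estimate into $\bigl\|\sum_{i\in I_s}(A_i-\mu_r)\bigr\|\le\tfrac{3}{2}\sqrt{|I_s|}\,\|A-C\|$.

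Summing over $s\neq r$, dividing by $|S_r|=\Theta(n_r)$ and collecting constants gives the first displayed inequality; the second follows by Cauchy-Schwarz over the at most $k$ indices $s$, since $\sum_{s\neq r}\sqrt{\rho_{in}(s)}\le\sqrt{k}\bigl(\sum_{s\neq r}\rho_{in}(s)\bigr)^{1/2}=\sqrt{k\rho_{in}}$. The step I expect to be the real obstacle is the bound on $\sum_{i\in I_s}(A_i-\mu_r)$: the naive route (combine $\|\mu_s-\mu_r\|\le\tfrac{5}{2}\min_{i\in I_s}\|A_i-\mu_s\|$ with $\bigl\|\sum_{i\in I_s}(A_i-\mu_s)\bigr\|\le\sqrt{|I_s|}\,\|A-C\|$) loses a constant factor, so one must reason about the geometry of the imported cloud relative to the $\mu_r$-$\mu_s$ line and exploit the cancellation between the two pieces to reach the stated constant $\tfrac{3}{2}$.
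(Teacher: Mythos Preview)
Your decomposition of $S_r$ into $T_r\setminus O$ and the imported blocks $I_s$, the indicator-vector bound $\bigl\|\sum_{i\in O}(A-C)_i\bigr\|\le\sqrt{|O|}\,\|A-C\|$, and the final Cauchy--Schwarz step $\sum_s\sqrt{\rho_{in}(s)}\le\sqrt{k\rho_{in}}$ all match the paper exactly (the indicator-vector bound is isolated there as a preliminary fact about subsets of a cluster, which the paper then plugs in).

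The divergence is in how the imported block $I_s$ is handled. The paper does \emph{not} split $A_i-\mu_r$ into a centered piece $(A_i-\mu_s)$ plus a drift $|I_s|(\mu_s-\mu_r)$. It argues in one line that the pointwise hypothesis $\|x-\mu_r\|\le\tfrac{3}{2}\|x-\mu_s\|$ transfers to the mean, i.e.\ $\|\mu(I_s)-\mu_r\|\le\tfrac{3}{2}\|\mu(I_s)-\mu_s\|$, and then bounds $|I_s|\,\|\mu(I_s)-\mu_s\|\le\sqrt{|I_s|}\,\|A-C\|$ by the same indicator-vector argument applied inside $T_s$. No separate spectral control of $\|\mu_s-\mu_r\|$ ever enters. (That one-line transfer from points to their mean is itself terse in the paper --- the region $\{x:\|x-\mu_r\|\le\tfrac{3}{2}\|x-\mu_s\|\}$ is the exterior of an Apollonius ball and hence not convex --- but that is the route taken.)

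Your alignment argument for the drift is where your plan actually breaks. Points satisfying $\|x-\mu_s\|\ge\tfrac{2}{3}\|x-\mu_r\|$ need not carry any definite component along the $\mu_s\to\mu_r$ direction: for example $x=\mu_s+\lambda v$ with $v\perp(\mu_r-\mu_s)$ and $\lambda$ large satisfies the hypothesis but has zero projection onto that line. Hence the claimed spectral bound $\|A-C\|\ge\|(A-C)_{I_s}\|\gtrsim\sqrt{|I_s|}\,\|\mu_s-\mu_r\|$ via a ``common direction'' is unfounded, and the cancellation heuristic that follows rests on the same picture. You correctly flag this as the obstacle; the paper's route simply sidesteps it by never isolating the drift term.
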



\subsection{Formal Description of the Algorithm and Our Theorems}
\label{subsec:algorithm_description}

Having established notation, we now present our algorithm, in Figure~\ref{ALG:Kannan-Kumar}. Our algorithm's goal is three fold: (a) to find a partition that identifies with the target clustering on the majority of the points, (b) to have the $k$-means cost of this partition comparable with the target, and (c) output $k$ centers which are close to the true centers. It is partitioned into $3$ parts. Each part requires stronger assumptions, allowing us to prove stronger guarantees.


\begin{figure}[bh]
\begin{minipage}{0.9\textwidth}
\begin{center}
\begin{tabular}{|l p{5.5in}|}
\hline
\textbf{Part I}: & Find initial centers:\begin{itemize} 
\item Project $A$ onto the subspace spanned by the top $k$ singular vectors. 
\item Run a $10$-approximation algorithm\footnote{Throughout the paper, we assume the use of a $10$-approximation algorithm. Clearly, it is possible to use \emph{any} $t$-approximation algorithm, assuming $c/t$ is a large enough constant.} for the $k$-means problem on the projected matrix $\hat{A}$, and obtain $k$ centers $\nu_1, \nu_2, \ldots, \nu_k$. 
\end{itemize}\\
\textbf{Part II}: & Set $S_r \leftarrow \{i: \ \|\hat{A}_i - \nu_r\| \leq \frac 1 3 \|\hat{A}_i-\nu_s\|, \textrm{ for every } s\}$ and $\theta_r \leftarrow\mu(S_r)$.\\
& \\
\textbf{Part III}: & Repeatedly run Lloyd steps until convergence.
\begin{itemize}
\item Set $\Theta_r \leftarrow \{i: \ \|A_i - \theta_r\| \leq \|A_i-\theta_s\|, \textrm{ for every } s\}$.
\item Set $\theta_r = \mu(\Theta_r)$.\vspace{-0.2in}
\end{itemize} \\ 
\hline
\end{tabular}
\end{center}
\end{minipage}
\caption{\label{ALG:Kannan-Kumar} Algorithm \textsf{$\sim$Cluster}}
\end{figure}

\begin{itemize}
\vspace{-0.1in}
\item Assuming only the center separation of (\ref{eq:means_separation}), then \textbf{Part I} gives a clustering which (a) is correct on at least $1-O(c^{-2})$ fraction of the points \emph{from each target cluster} (Theorem~\ref{thm:point_wise_close}), and (b) has $k$-means cost smaller than $(1+O(1/c))\|A-C\|_F^2$ (Theorem~\ref{thm:small_cost}).

\item Assuming also that $\Delta_r = \frac {\sqrt{k}} {\sqrt{n_r}} \|A-C\|$, i.e. assuming the \emph{non-degenerate} case where $\|A-C\|_F^2 \geq k\|A-C\|^2$, then \textbf{Part II} finds centers that are $O(1/c) \frac {\|A-C\|}{\sqrt{n_r}}$ close to the true centers (Theorem~\ref{thm:centers_of_step_3}). As a result (see Section~\ref{subsec:proximity_condition}), if $(1-\epsilon)n$ points satisfy the proximity condition (weakened by a $k$ factor,), then we misclassify no more than $(\epsilon + O(c^{-4}))n$ points. 

\item Assuming all points satisfy the proximity condition (weakened by a $k$-factor), \textbf{Part III} finds \emph{exactly} the target partition (Theorem~\ref{thm:lloyd-steps}).
\end{itemize}



\subsection{Organization and Proofs Overview}
\label{subsec:discussion}

\paragraph{Organization.} Related work is detailed in Section~\ref{subsec:related-work}. The analysis of Part I of our algorithms is in Section~\ref{scn:Point-wise-close}. Part I is enough for us to give a ``one-line'' proof in Section~\ref{subsec:ORSS-improve} showing how the work of Ostrovsky et al falls into our framework. The analysis of Part II of the algorithm is in Section~\ref{sec:step_3}. The improved guarantees we get by applying the algorithm to the Planted Partition model and to the Gaussian mixture model are discussed in Section~\ref{sec:applications}. We conclude with an open problem in Section~\ref{scn:open_problem}.

\paragraph{Proof outline for Section~\ref{scn:Point-wise-close}.} The first part of our analysis is an immediate application of Facts~\ref{fct:hatA_vs_C} and~\ref{fct:matching_true_and_SVD_centers}. Our assumption dictates that the distance between any two centers is big ($\geq c(\Delta_r +\Delta_s)$). Part I of the algorithm assigns each projected point $\hat{A}_i$ to the nearest $\nu_r$ instead of the true center $\mu_r$ and Fact~\ref{fct:matching_true_and_SVD_centers} assures that the distance $\|\mu_r-\nu_r\|$ is small ($<6\Delta_r$). Consider a misclassified point $A_i$, where $\|A_i-\mu_r\| < \|A_i - \mu_s\|$ yet $\|\hat{A}_i - \nu_s\| < \|\hat{A}_i-\nu_r\|$. The triangle inequality assures that $\hat{A}_i$ has a fairly big distance to its true center ($ > (\tfrac c 2 - 12)\Delta_r$). We deduce that each misclassified point contributes $\Omega(c^2\Delta_r^2)$ to the $k$-means cost of assigning all projected points to their true centers. Fact~\ref{fct:hatA_vs_C} bounds this cost by $\|\hat{A}-C\|_F^2 \leq 8n_r \Delta_r^2$, so the Markov inequality proves only a few points are misclassified. Additional application of the triangle inequality for misclassified points gives that the distance between the original point $A_i$ and a true center $\mu_r$ is comparable to the distance $\|A_i - \mu_s\|$, and so assigning $A_i$ to the cluster $s$ only increases the $k$-means cost by a small factor.

\paragraph{Proof outline for Section~\ref{sec:step_3}.} In the second part of our analysis we compare between the true clustering $\mathcal{T}$ and some proposed clustering $\mathcal{S}$, looking \emph{both} at the number of misclassified points \emph{and} at the distances between the matching centers $\|\mu_r-\theta_r\|$. As Kumar and Kannan show, the two measurements are related: Fact~\ref{fct:dist_centers_depends_on_pts} shows how the distances between the means depend on the number of misclassified points, and the main lemma (Lemma~\ref{lem:small_distance_between_centers_yields_few_misclassifications}) essentially shows the opposite direction. These two relations are how Kumar and Kannan show that Lloyd steps converge to good centers, yielding clusters with few misclassified points. They repeatedly apply (their version of) the main lemma, showing that with each step the distances to the true means decrease and so fewer of the good points are misclassified.

To improve on Kumar and Kannan analysis, we improve on the two above-mentioned relations. Lemma~\ref{lem:small_distance_between_centers_yields_few_misclassifications} is a simplification of a lemma from Kumar and Kannan, where instead of projecting into a $k$-dimensional space, we project only into a $4$-dimensional space, thus reducing dependency on $k$. However, the dependency of Fact~\ref{fct:dist_centers_depends_on_pts} on $k$ is tight\footnote{In fact, Fact~\ref{fct:dist_centers_depends_on_pts} is exactly why the case of $k=\omega(1)$ is hard -- because the $L_1$ and $L_2$ norms of the vector $(\tfrac 1 {\sqrt k},\tfrac 1 {\sqrt k},\ldots,\tfrac 1 {\sqrt k})$ are not comparable for non-constant $k$.}. So in Part II of the algorithm we devise sub-clusters $S_r$ s.t. $\rho_{in}(s) = \rho_{out} / k^2$. The crux in devising $S_r$ lies in Proposition~\ref{pro:dist_hat_A_i_to_hat_mu_r} -- we show that any misclassified projected point $i\in T_s\cap S_r$ is essentially misclassified by $\hat{\mu_r}$. And since (see~\cite{AchlioptasM05}) $\|\mu_r - \hat{\mu_r}\| \leq \tfrac 1 {\sqrt k} \Delta_r$ (compared to the bound $\|\mu_r - \nu_r\| \leq 6\Delta_r$), we are able to give a good bound on $\rho_{in}(s)$. 

Recall that we rely only on center separation rather than a large batch of points satisfying the Kumar-Kannan separation, and so we do not apply iterative Lloyd steps (unless all points are good). Instead, we apply the main lemma only once, w.r.t to the misclassified points in $T_s\cap S_r$, and deduce that the distances $\|\mu_r-\theta_r\|$ are small. In other words, Part II is a single step that retrieve centers whose distances to the original centers are $\sqrt{k}$-times better than the centers retrieved by Kumar and Kannan in numerous Lloyd iterations.

\subsection{Acknowledgements}
We would like to thanks Avrim Blum for multiple helpful discussions and suggestions. We thank Amit Kumar for clarifying a certain point in the original Kumar and Kannan paper. We thank the anonymous referees for their suggestions, and especially regarding a discussion about the result of Achlioptas and McSherry.

\section{Related Work}
\label{subsec:related-work}
The work of \cite{Dasgupta99} was the first to give theoretical guarantees for the problem of learning a mixture of Gaussians under separation conditions. He showed that one can learn a mixture of $k$ spherical Gaussians provided that the separation between the cluster means is $\tilde\Omega(\sqrt{n}(\sigma_r + \sigma_s))$ and the mixing weights are not too small. Here $\sigma_r^2$ denotes the maximum variance of cluster $r$ along any direction. This separation was improved to $\tilde\Omega((\sigma_r + \sigma_s)n^{1/4})$ by \cite{DasguptaS07}. Arora and Kannan \cite{Arora01} extended these results to the case of general Gaussians. For the case of spherical Gaussians, \cite{VempalaWang02} showed that one can learn under a much weaker separation of $\tilde\Omega((\sigma_r + \sigma_s)k^{1/4})$. This was extended to arbitrary Gaussians by \cite{AchlioptasM05} and to various other distributions by \cite{KannanSV08}, although requiring a larger separation. In particular, the work of \cite{AchlioptasM05} requires a separation of $\Omega((\sigma_r + \sigma_s)(\frac 1 {\sqrt{\min(w_r,w_s)}} + \sqrt{k\log(k\min\{2^k, n\})}))$ whereas \cite{KannanSV08} require a separation of $\tilde\Omega(\frac {k^{3/2}} {w_{\min}^2} (\sigma_r + \sigma_s))$. Here $w_r$'s refer to the mixing weights. \cite{ChaudhuriR08a, ChaudhuriR08} gave algorithms for clustering mixtures of product distributions and mixtures of heavy tailed distributions. \cite{BrubakerV08} gave an algorithm for clustering the mixture of $2$ Gaussians assuming only that the two Gaussians are separated by a hyperplane. They also give results for learning a mixture of $k > 2$ Gaussians. The work of \cite{KMV10} gave an algorithm for learning a mixture of 2 Gaussians, with provably minimal assumptions. This was extended in \cite{MV10} to the case when $k > 2$ although the algorithm runs in time 
exponential in $k$. Similar results were obtained in the work of ~\cite{BS10} who can also learn more general distribution families. The work of \cite{Amin06} studied a deterministic separation condition required for efficient clustering. The precise condition presented in \cite{Amin06} is technical but essentially assumes 
that the underlying graph over the set of points has a ``low rank structure'' and presents an algorithm to recover this structure which is then enough to cluster well. In addition, previous works (e.g.~\cite{Schulman00,BBG09}) addressed the problem of clustering from the viewpoint of minimizing the number of mislabeled points.

There has been an extensive line of work on approximation algorithms for the $k$-means problem~(\cite{Ostrovsky00,Badoiu02, delaVega03, Effros04,Har-Peled04,Kanungo02}). The current best guarantee is a $(9+\epsilon)$-approximation algorithm of~\cite{Kanungo02} (with a much simpler analysis in~\cite{Kanat08}) if polynomial dependence on $k$ and the dimension $d$ is desired.\footnote{For constant $k$, \cite{Kumar04} give a PTAS for the $k$-means problem.} Another popular algorithm for $k$-means is the Lloyd's heuristics~(\cite{Lloyd82}). This heuristics, combined with a careful seeding of centers, has been shown to have good performance if the data is well separated~(see \cite{Ostrovsky06}), or to provide $O(\log(k))$-approximation in general~\cite{arthur07}. The separation-based results of~\cite{Ostrovsky06} were improved by~\cite{ABS10}.


\section{Part I of the Algorithm}
\label{scn:Point-wise-close}

In this section, we look only at Part I of our algorithm. Our approximation algorithm defines a clustering $\mathcal{Z}$, where $Z_r = \{i: \ \|\hat{A}_i - \nu_r\| \leq \|\hat{A}_i - \nu_s\| \textrm{ for every } s\}$. Our goal in this section is to show that $\mathcal{Z}$ is correct on all but a small constant fraction of the points, and furthermore, the $k$-means cost of $\mathcal{Z}$ is no more than $(1+O(1/c))$ times the $k$-means cost of the target clustering.
\begin{theorem}
\label{thm:point_wise_close}
There exists a matching (given by Fact~\ref{fct:matching_true_and_SVD_centers}) between the target clustering $\mathcal{T}$ and the clustering $\mathcal{Z} = \{Z_r\}_r$ where $Z_r = \{i: \ \|\hat{A}_i - \nu_r\| \leq \|\hat{A}_i - \nu_s\| \textrm{ for every } s\}$ that satisfies the following properties:
\begin{itemize}
\item For every cluster $T_{s_0}$ in the target clustering, no more than $O(1/c^2)|T_{s_0}|$ points are misclassified.
\item For every cluster $Z_{r_0}$ in the clustering that the algorithm outputs, we add no more than $O(1/c^2)|T_{r_0}|$ points from other clusters. 
\item At most $O(1/c^2)|T_{r_2}|$ points are misclassified overall, where $T_{r_2}$ is the second largest cluster.
\end{itemize}
\end{theorem}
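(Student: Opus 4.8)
The plan is to follow the three-step template from the proof outline: fix the center matching, show that every misclassified point is ``expensive'' in the $k$-means cost of $\hat A$ measured against the \emph{true} centers, and then conclude by a Markov argument. Concretely, I first invoke Fact~\ref{fct:matching_true_and_SVD_centers} to fix a bijective matching $\mu_r \leftrightarrow \nu_r$ with $\|\mu_r - \nu_r\| \le 6\Delta_r$; this matching is well-defined because~\eqref{eq:means_separation} forces the balls $B(\mu_r, 6\Delta_r)$ to be pairwise disjoint (as $6\Delta_r + 6\Delta_s < c(\Delta_r+\Delta_s)$ for $c$ a large constant), so no $\nu$ can be claimed by two centers, and with $k$ of each a perfect matching results. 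Throughout I write $D^2 := \min\{k\|A-C\|^2, \|A-C\|_F^2\}$, so that $\Delta_r^2 = D^2/n_r$ and Fact~\ref{fct:hatA_vs_C} reads $\sum_i \|\hat A_i - \mu_{r(i)}\|^2 = \|\hat A - C\|_F^2 \le 8D^2$, where $r(i)$ denotes the true cluster of point $i$.

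The heart of the argument is a per-point cost bound. Suppose $i \in T_{s_0}$ is placed by the algorithm in $Z_{r_0}$ with $r_0 \ne s_0$, i.e.\ $\|\hat A_i - \nu_{r_0}\| \le \|\hat A_i - \nu_{s_0}\|$. Two applications of the triangle inequality, each trading a $\nu$ for its matched $\mu$ at cost $6\Delta$, give $\|\hat A_i - \mu_{r_0}\| \le \|\hat A_i - \mu_{s_0}\| + 6\Delta_{s_0} + 6\Delta_{r_0}$; combining this with $\|\hat A_i - \mu_{s_0}\| + \|\hat A_i - \mu_{r_0}\| \ge \|\mu_{s_0}-\mu_{r_0}\| \ge c(\Delta_{s_0}+\Delta_{r_0})$ yields $\|\hat A_i - \mu_{s_0}\| \ge \tfrac{c-6}{2}(\Delta_{s_0}+\Delta_{r_0})$. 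Since $i \in T_{s_0}$ means $C_i = \mu_{s_0}$, the contribution of $i$ to $\|\hat A - C\|_F^2$ is $\|\hat A_i - \mu_{s_0}\|^2 \ge \Omega(c^2)(\Delta_{s_0}+\Delta_{r_0})^2 \ge \Omega(c^2)\max\{\Delta_{s_0}^2,\Delta_{r_0}^2\} = \Omega(c^2)\,D^2/\min\{n_{s_0},n_{r_0}\}$.

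The three bullets then follow by choosing which denominator to charge against and dividing the total budget $8D^2$. For the first bullet, fix $T_{s_0}$: each of its misclassified points costs at least $\Omega(c^2)D^2/n_{s_0}$ (bounding $\min\{n_{s_0},n_{r_0}\}\le n_{s_0}$), so at most $O(1/c^2)\,n_{s_0} = O(1/c^2)|T_{s_0}|$ of them exist. For the second bullet, fix $Z_{r_0}$: every point added to it from another cluster is misclassified and costs at least $\Omega(c^2)D^2/n_{r_0}$ (now bounding $\min\le n_{r_0}$), so at most $O(1/c^2)\,n_{r_0} = O(1/c^2)|T_{r_0}|$ of them exist. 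For the third bullet, the key observation is that a misclassified point straddles two \emph{distinct} clusters $s_0 \ne r_0$, so at least one of them is not the largest cluster and hence $\min\{n_{s_0},n_{r_0}\} \le |T_{r_2}|$; therefore every misclassified point, no matter its source or destination, costs at least $\Omega(c^2)D^2/|T_{r_2}|$, and dividing $8D^2$ by this gives a total of at most $O(1/c^2)|T_{r_2}|$ misclassified points.

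I expect the only genuine subtlety to be the last step — the little combinatorial remark that for $s_0 \ne r_0$ one always has $\min\{n_{s_0},n_{r_0}\} \le |T_{r_2}|$, which is exactly what upgrades the naive bound $O(1/c^2)n$ (obtained by summing the first bullet over all clusters) to the sharper second-largest-cluster bound. Everything else is the triangle inequality, the two quoted facts, and one Markov step; the only care needed is to keep $c$ a sufficiently large absolute constant so that $\tfrac{c-6}{2} = \Omega(c)$ and the constants hidden in $O(1/c^2)$ are honest.
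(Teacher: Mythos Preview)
Your proof is correct and follows essentially the same approach as the paper: both arguments show that every misclassified point contributes $\Omega(c^2)(\Delta_r+\Delta_s)^2$ to $\|\hat A - C\|_F^2$, then invoke Fact~\ref{fct:hatA_vs_C} and Markov. Your derivation of the per-point lower bound is a slightly different (and marginally tighter) triangle-inequality chain---you get $\tfrac{c-6}{2}(\Delta_{s_0}+\Delta_{r_0})$ where the paper gets $\tfrac{c}{4}(\Delta_{s_0}+\Delta_{r_0})$---and your phrasing of the third bullet via $\min\{n_{s_0},n_{r_0}\}\le |T_{r_2}|$ is the same observation the paper states as $\Delta_r+\Delta_s\ge\Delta_{r_2}$, but these are cosmetic differences only.
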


\begin{proof}
Let us denote $T_{s\to r}$ as the set of points $\hat{A}_i$ that are assigned to $T_s$ in the target clustering, yet are closer to $\nu_r$ than to any other $\nu_r'$. From triangle inequality we have that
$\|\hat{A}_i-\mu_s\| \geq \|\hat{A}_i - \nu_s\| - \|\mu_s - \nu_s\|
$.
We know from Fact~\ref{fct:matching_true_and_SVD_centers} that $\|\mu_s - \nu_s\| \leq 6 \Delta_s$. Also, since $\hat{A_i}$ is closer to $\nu_r$ than to $\nu_s$, the triangle inequality gives that $2\|\hat{A}_i - \nu_s\| \geq \|\nu_r - \nu_s|$. So,
\[ \|\hat{A}_i-\mu_s\| \geq \frac 1 2 \|\nu_r - \nu_s\| - 6 \Delta_s \geq \frac 1 2\|\mu_r - \mu_s\|- 12(\Delta_r + \Delta_s)\geq \frac c 4(\Delta_r + \Delta_s) \]
Thus, we can look at $\|\hat{A}-C\|_F^2$, and using Fact~\ref{fct:hatA_vs_C} we immediately have that for every fixed $r'$
\[ \sum_r \sum_{s\neq r} |T_{s\to r}| \frac {c^2}{16} (\Delta_r + \Delta_s)^2 \leq \sum_r \sum_{i\in T_r} \|\hat{A}_i - \mu_r\|^2 = \|\hat{A}-C\|_F^2 \leq 8 n_{r'} \Delta_{r'}^2\]

The proof of the theorem follows from fixing some $r_0$ or some $s_0$ and deducing:
\begin{eqnarray*}
 \Delta_{s_0}^2 \sum_{r\neq s_0} |T_{s_0 \to r}|  \leq \sum_{r\neq s_0} |T_{s_0\to r}|(\Delta_{r} + \Delta_{s_0})^2   \leq  \sum_r \sum_{s\neq r} |T_{s\to r}| (\Delta_r + \Delta_s)^2 \leq \frac {128} {c^2} n_{s_0}\Delta_{s_0}^2\cr
\Delta_{r_0}^2 \sum_{s\neq r_0} |T_{s\to r_0}| \leq \sum_{s\neq r_0} |T_{s\to r_0}|(\Delta_{r_0} + \Delta_s)^2  \leq  \sum_r \sum_{s\neq r} |T_{s\to r}| (\Delta_r + \Delta_s)^2 \leq \frac {128} {c^2} n_{r_0}\Delta_{r_0}^2
\end{eqnarray*}
Observe that for every $r\neq s$ we have that $\Delta_r + \Delta_s \geq \Delta_{r_2}$ (where $r_2$ is the cluster with the second largest number of points), so we have that
\[\Delta_{r_2}^2 \sum_r \sum_{s\neq r} |T_{s\to r}|  \leq  \sum_r \sum_{s\neq r} |T_{s\to r}| (\Delta_r + \Delta_s)^2 \leq \frac {128} {c^2} n_{r_2}\Delta_{r_2}^2\qedhere\]
\end{proof}

We now show that the $k$-means cost of $\mathcal{Z}$ is close to the $k$-means cost of $\mathcal{T}$. Observe that the $k$-means cost of $\mathcal{Z}$ is computed w.r.t the best center of each cluster (i.e., $\mu(Z_r)$), and \emph{not} w.r.t the centers $\nu_r$.

\begin{theorem}
\label{thm:small_cost}
The $k$-means cost of $\mathcal{Z}$ is at most $(1+O(1/c))\|A-C\|_F^2$.
\end{theorem}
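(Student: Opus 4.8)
The plan is to bound the $k$-means cost of $\mathcal{Z}$ by comparing it, cluster by cluster, to the cost of the target clustering $\mathcal{T}$, exploiting the fact (from Theorem~\ref{thm:point_wise_close}) that each $Z_r$ differs from $T_r$ only in a tiny $O(1/c^2)$-fraction of points. First I would recall that since $\mu(Z_r)$ is the optimal center for $Z_r$, the cost of $\mathcal{Z}$ is at most $\sum_r \sum_{i \in Z_r} \|A_i - \mu_r\|^2$ — i.e. I may as well charge each point in $Z_r$ to the \emph{target} center $\mu_r$. Now $\sum_r \sum_{i \in Z_r}\|A_i - \mu_r\|^2$ splits into the ``correctly classified'' contribution, which is at most $\sum_r \sum_{i \in T_r}\|A_i-\mu_r\|^2 = \|A-C\|_F^2$, plus an ``error'' contribution $\sum_r \sum_{i \in Z_r \setminus T_r} \|A_i - \mu_r\|^2$ coming from points $i$ that truly belong to some $T_s$ but got placed in $Z_r$. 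The whole game is to show this error term is $O(1/c)\|A-C\|_F^2$.

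For a misclassified point $i \in T_s \cap Z_r$, I would bound $\|A_i - \mu_r\|$ using the triangle inequality in terms of $\|A_i - \mu_s\|$ and $\|\mu_r - \mu_s\|$, but the cleaner route (mirroring the proof outline in Section~\ref{subsec:discussion}) is: since $i$ is misclassified, $\hat A_i$ is far from $\mu_s$, specifically $\|\hat A_i - \mu_s\| \ge \frac{c}{4}(\Delta_r+\Delta_s)$ as already established in the proof of Theorem~\ref{thm:point_wise_close}; and because $\hat A_i$ is closer to $\nu_r$ than $\nu_s$, a symmetric argument gives $\|\hat A_i - \mu_r\| = O(\|\hat A_i - \mu_s\|)$, so in fact $\|A_i - \mu_r\|^2 = O\big(\|A_i - \mu_s\|^2 + \|\hat A_i - \mu_s\|^2\big)$ after accounting for the projection gap $\|A_i - \hat A_i\|$. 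Summing the $\|A_i-\mu_s\|^2$ part over all misclassified points is at most $\|A-C\|_F^2$; summing the $\|\hat A_i - \mu_s\|^2$ part is at most $\|\hat A - C\|_F^2 \le 8\min\{k\|A-C\|^2,\|A-C\|_F^2\}$ by Fact~\ref{fct:hatA_vs_C}. That alone is not an $O(1/c)$ factor — so the key additional ingredient is that the \emph{number} of misclassified points feeding into cluster $r$ is only $O(1/c^2)n_{r}$ (second bullet of Theorem~\ref{thm:point_wise_close}), and each such point's distance-squared to $\mu_r$, were it close to $\mu_r$, would be comparable to $\Delta_r^2$; more carefully, I would use that the misclassified points of $T_s$ going to $Z_r$ number at most $O(1/c^2) n_{s}$ from the \emph{source} side, so $\sum_{i \in T_s \setminus Z_s} \|\hat A_i - \mu_s\|^2$ is controlled both by the count $O(n_s/c^2)$ and, via Fact~\ref{fct:hatA_vs_C}, by $8 n_s \Delta_s^2$, and combining these (one factor of $1/c^2$ from the count against the ``typical'' term, balanced against the worst case) yields the extra $1/c$.

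The main obstacle I expect is getting the error term down to $O(1/c)\|A-C\|_F^2$ rather than merely $O(\|A-C\|_F^2)$: a misclassified point can be genuinely far from $\mu_r$ (distance on the order of $\|\mu_r-\mu_s\| \approx c\Delta_r$, which is much larger than $\Delta_r$), so naively each of the $O(n_r/c^2)$ misclassified points could contribute $c^2\Delta_r^2$, giving total $O(n_r\Delta_r^2) = O(\|A-C\|_F^2)$ with no gain. The resolution must be a more delicate charging: a point $i \in T_s$ that ends up in $Z_r$ has $\|A_i-\mu_r\|$ close to $\|A_i - \mu_s\|$ (the projections are close, and $i$ is near the bisector), so $\|A_i - \mu_r\|^2 \le (1+O(1/c))\|A_i-\mu_s\|^2 + O(\|\hat A_i - \mu_s\|^2)$, and then the $\|A_i - \mu_s\|^2$ terms sum into $\|A-C\|_F^2$ with only a $(1+O(1/c))$ loss while the $\|\hat A_i - \mu_s\|^2$ terms, summed only over the few misclassified points, are bounded by $\|\hat A - C\|_F^2 \le 8\|A-C\|_F^2$ but carry a hidden $1/c$ because — by the first bullet of Theorem~\ref{thm:point_wise_close} applied with Markov — the misclassified points are precisely the ones whose $\|\hat A_i - \mu_s\|$ is $\Omega(c\Delta_s)$, yet Fact~\ref{fct:hatA_vs_C} caps the \emph{total} squared mass at $8n_s\Delta_s^2$, forcing their count times $c^2\Delta_s^2$ to be $O(n_s\Delta_s^2)$ and hence their total contribution after the triangle split to be the lower-order $O(1/c)$ piece. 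I would organize the final computation so that these two cancellations are visible in one display, and double-check that the asymmetry between the ``source'' count bound and the ``destination'' count bound is used on the correct side.
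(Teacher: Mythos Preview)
Your overall strategy---charge each point of $Z_r$ to $\mu_r$, then show each misclassified $i\in T_s\cap Z_r$ pays at most $(1+O(1/c))\|A_i-\mu_s\|^2$---is exactly the paper's. But your execution has a real gap, and the paper's argument is both simpler and avoids the trap you fall into.

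The gap is your ``hidden $1/c$'' step. You write $\|A_i-\mu_r\|^2 \le (1+O(1/c))\|A_i-\mu_s\|^2 + O(\|\hat A_i-\mu_s\|^2)$ and then claim the sum of the second terms over misclassified points is $O(1/c)\|A-C\|_F^2$. That claim is false as argued: knowing that each misclassified point has $\|\hat A_i-\mu_s\|^2 \ge \Omega(c^2\Delta_s^2)$ while the total $\|\hat A - C\|_F^2 \le 8n_s\Delta_s^2$ only tells you the \emph{count} is $O(n_s/c^2)$; it says nothing about the \emph{sum} $\sum_{\text{misclassified }i}\|\hat A_i-\mu_s\|^2$ being any smaller than the full $8n_s\Delta_s^2$. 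All the Frobenius mass of $\hat A - C$ could sit on the misclassified points. So your extra term is only $O(\|A-C\|_F^2)$, not $O((1/c)\|A-C\|_F^2)$, and the bound collapses to a constant-factor approximation.

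The paper's fix is a one-line observation you missed: the lower bound holds for the \emph{unprojected} distance too. Since $\nu_r,\nu_s$ lie in the top-$k$ subspace, $\|A_i-\nu_t\|^2 = \|A_i-\hat A_i\|^2 + \|\hat A_i - \nu_t\|^2$, so $\|\hat A_i - \nu_r\|\le\|\hat A_i-\nu_s\|$ lifts to $\|A_i-\nu_r\|\le\|A_i-\nu_s\|$. Then by triangle and Fact~\ref{fct:matching_true_and_SVD_centers},
\[
\|A_i-\mu_r\| \le \|A_i-\mu_s\| + 6(\Delta_r+\Delta_s),
\qquad
\|A_i-\mu_s\| \ge \|\hat A_i-\nu_s\| - 6\Delta_s \ge \tfrac{c}{4}(\Delta_r+\Delta_s),
\]
which gives the clean \emph{pointwise} bound $\|A_i-\mu_r\| \le (1+24/c)\|A_i-\mu_s\|$ with no residual term at all. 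Summing over all points finishes immediately. The counts from Theorem~\ref{thm:point_wise_close} are never used; you do not need them, and leaning on them is what led you astray.
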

\begin{proof}
Given $\mathcal{Z}$, it is clear that the centers that minimize its $k$-means cost are $\mu(Z_r) = \frac 1 {|Z_r|}\sum_{i\in Z_r} A_i$. Recall that the majority of points in each $Z_r$ belong to a unique $T_r$, and so, throughout this section, we assume that all points in $Z_r$ were assigned to $\mu_r$, and not to $\mu(Z_r)$. (Clearly, this can only increase the cost.) We show that by assigning the points of $Z_r$ to $\mu_r$, our cost is at most $(1+O(1/c))\|A-C\|_F^2$, and so Theorem~\ref{thm:small_cost} follows. In fact, we show something stronger. We show that by assigning all the points in $Z_r$ to $\mu_r$, each point $A_i$ pays no more than $(1+O(1/c))\|A_i - C_i\|^2$. This is clearly true for all the points in $Z_r \cap T_r$. We show this also holds for the misclassified points.

Because $i \in T_{s\to r}$, it holds that $\|\hat A_i - \nu_r\| \leq \|\hat A_i - \nu_s\|$. Observe that for every $s$ we have that $\|A_i - \nu_s\|^2 = \|A_i - \hat A_i\|^2 + \|\hat A_i - \nu_s\|^2$, because $\hat A_i - \nu_s$ is the projection of $A_i - \nu_s$ onto the subspace spanned by the top $k$-singular vectors of $A$. Therefore, it is also true that $\|A_i - \nu_r\| \leq \| A_i - \nu_s\|$. Because of Fact~\ref{fct:matching_true_and_SVD_centers}, we have that $\|\mu_r - \nu_r\| \leq 6\Delta_r$ and $\|\mu_s - \nu_s\| \leq 6\Delta_s$, so we apply the triangle inequality and get
\begin{equation*}
\|A_i - \mu_r\| \leq \|A_i - \mu_s\| + \|\mu_r - \nu_r\| + \|\mu_s - \nu_s\| \leq \|A_i - \mu_s\| \ \left(1 + \frac { 6(\Delta_r + \Delta_s)} {\|A_i - \mu_s\|}\right) \label{eq:ratio_of_contribution}
\end{equation*}
So all we need to do is to lower bound $\|A_i - \mu_s\|$. As noted, $\|A_i - \nu_s\|\geq \|\hat{A}_i - \nu_s\|$. Thus
\[\|A_i - \mu_s\| \geq \|A_i - \nu_s\| - 6\Delta_r \geq \|\hat{A}_i - \nu_s\| - 6\Delta_r \geq \frac 1 2 \|\nu_s - \nu_r\| - 6\Delta_r \geq \frac 1 4 c(\Delta_r + \Delta_s) \] and we have the bound $\|A_i - \mu_r\| \leq \left(1 + \frac {24} c\right) \|A_i -\mu_s\|$, so $\|A_i - \mu_r\|^2 \leq \left(1 + \frac {49} c\right) \|A_i -\mu_s\|^2$.
\end{proof}

\subsection{Application: The ORSS-Separation}
\label{subsec:ORSS-improve}
One straight-forward application of Theorem~\ref{thm:small_cost} is for the datasets considered by Ostrovsky et al~\cite{Ostrovsky06}, where the optimal $k$-means cost is an $\epsilon$-fraction of the optimal $(k-1)$-means cost. Ostrovsky et al proved that for such datasets a variant of the Lloyd method converges to a good solution in polynomial time. Kumar and Kannan have shown that datasets satisfying the ORSS-separation, also have the property that most points satisfy their proximity-condition. Their analysis is not immediate, and gives a $(1+O(\sqrt{k\epsilon}))$-approximation. Here, we provide a ``one-line'' proof that Part I of Algorithm \textsf{$\sim$Cluster} yields a $(1+O(\sqrt{\epsilon}))$-approximation, for any $k$.

Suppose we have a dataset satisfying the ORSS-separation condition, so any $(k-1)$-partition of the dataset have cost $\geq \frac 1 \epsilon \|A-C\|_F^2$. For any $r$ and any $s\neq r$, by assigning all the points in $T_r$ to the center $\mu_s$, we get some $(k-1)$-partition whose cost is exactly $\|A-C\|_F^2 + n_r\|\mu_r - \mu_s\|^2$, so $\|\mu_r - \mu_s\| \geq \frac {\sqrt{\frac 1 \epsilon-1}} {\sqrt{n_r}} \|A-C\|_F$. Setting $c = O(1/\sqrt{\epsilon})$, Theorem~\ref{thm:small_cost} is immediate.

\section{Part II of the Algorithm}
\label{sec:step_3}

In this section, our goal is to show that Part II of our algorithm gives centers that are very close to the target clusters. We should note that from this point on, we assume we are in the non-degenerate case, where $\|A-C\|_F^2 \geq k\|A-C\|^2$. Therefore, $\Delta_r = \frac {\sqrt{k}}{\sqrt{n_r}} \|A-C\|$. 

Recall,  in Part II we define the sets $S_r = \{i: \|\hat{A}_i -\nu_r\| \leq \frac 1 3 \|\hat{A}_i - \nu_s\|, \ \forall s\neq r\}$. Observe, these set do not define a partition of the dataset! There are some points that are not assigned to any $S_r$. However, we only use the centers of $S_r$. We prove the following theorem.

\begin{theorem}
\label{thm:centers_of_step_3}
Denote $S_r = \{i: \|\hat{A}_i -\nu_r\| \leq \frac 1 3 \|\hat{A}_i - \nu_s\|, \ \forall s\neq r\}$. Then for every $r$ it holds that $\|\mu(S_r) - \mu_r\| = O(1/c)\ \frac 1 {\sqrt{n_r}} \|A-C\| = O(\tfrac 1 {c\sqrt{k}} \Delta_r)$.
\end{theorem}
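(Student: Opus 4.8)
The plan is to bound $\|\mu(S_r) - \mu_r\|$ via Fact~\ref{fct:dist_centers_depends_on_pts}, which requires controlling two quantities: $\rho_{out}$, the fraction of points of $T_r$ missing from $S_r$, and $\rho_{in}(s)$, the fraction (relative to $n_r$) of points from each $T_s$ that leak into $S_r$. For $\rho_{out}$, I would reuse the argument of Theorem~\ref{thm:point_wise_close}: a point $i \in T_r$ with $\|\hat A_i - \nu_r\| > \tfrac13 \|\hat A_i - \nu_s\|$ for some $s$ must be far from its own projected-space center $\nu_r$ (hence far from $\mu_r$), so by the triangle inequality together with Fact~\ref{fct:matching_true_and_SVD_centers} it contributes $\Omega(c^2 \Delta_r^2)$ to $\|\hat A - C\|_F^2$; Fact~\ref{fct:hatA_vs_C} caps this sum at $8 n_r \Delta_r^2$, so $\rho_{out} = O(1/c^2)$. (The factor $\tfrac13$ in the definition of $S_r$ only helps here.)

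The crux is $\rho_{in}(s)$, and this is where Proposition~\ref{pro:dist_hat_A_i_to_hat_mu_r} (referenced in the proof overview but not yet stated in the excerpt) comes in: the point of Part II's definition with the aggressive constant $\tfrac13$ is that any $i \in T_s \cap S_r$ — i.e. a point of $T_s$ that is three times closer to $\nu_r$ than to $\nu_s$ — is in fact misclassified with respect to the \emph{true projected centers} $\hat\mu_r$, meaning $\|\hat A_i - \hat\mu_r\| \le \|\hat A_i - \hat\mu_s\|$ (using that $\|\mu_s - \nu_s\| \le 6\Delta_s$ and $\|\mu_r - \nu_r\| \le 6\Delta_r$ are small compared to the $\tfrac13$-gap forced by $S_r$, which by center separation~\eqref{eq:means_separation} is $\Omega(c\Delta)$). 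Now the key gain: one uses (from Achlioptas–McSherry) $\|\mu_r - \hat\mu_r\| \le \tfrac{1}{\sqrt k}\Delta_r$ and similarly for $s$ — a $\sqrt k$-times-better bound than $\|\mu_r - \nu_r\| \le 6\Delta_r$ — so every such $i$ has $\|\hat A_i - \hat\mu_s\| = \Omega(c \cdot \tfrac{1}{\sqrt k}(\text{something}))$... more carefully, each misclassified $i \in T_s \cap S_r$ must be at distance $\Omega\!\big(\tfrac{1}{2}\|\hat\mu_r - \hat\mu_s\|\big)$ from $\hat\mu_s$, and since $\|\mu_s - \hat\mu_s\|$ is tiny, it is at distance $\Omega(\|\mu_r - \mu_s\|) = \Omega(c(\Delta_r+\Delta_s))$ from $\mu_s$ in the projected space; hence it contributes $\Omega(c^2(\Delta_r+\Delta_s)^2)$ to $\sum_{i\in T_s}\|\hat A_i - \hat\mu_s\|^2 \le \|\hat A_i - \mu_s\|^2$-type sums, but crucially we can also charge against the \emph{smaller} budget coming from $\|\hat A - \hat C\|$ restricted appropriately. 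Balancing, the number of such points per cluster is $O(n_s/c^2 \cdot \text{(a $1/k$ factor)})$; normalizing by $n_r$ and summing the $k$ clusters should yield $\rho_{in} = \sum_s \rho_{in}(s) = O(1/(c^2 k))$ or, phrased as the overview does, $\rho_{in}(s) = \rho_{out}/k^2$ up to constants.

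Plugging into Fact~\ref{fct:dist_centers_depends_on_pts}: the bound is $\big(\sqrt{\rho_{out}/n_r} + \tfrac32\sqrt k\sqrt{\rho_{in}/n_r}\big)\|A-C\|$. The first term is $O\!\big(\tfrac{1}{c}\tfrac{1}{\sqrt{n_r}}\big)\|A-C\|$ as desired. For the second term, $\sqrt k \cdot \sqrt{\rho_{in}} = \sqrt k \cdot \sqrt{O(1/(c^2 k))} = O(1/c)$, so it is also $O\!\big(\tfrac1c \tfrac{1}{\sqrt{n_r}}\big)\|A-C\|$ — the $\sqrt k$ blow-up in Fact~\ref{fct:dist_centers_depends_on_pts} is exactly cancelled by the $1/k$ we bought through the $\hat\mu$-refinement. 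Finally rewrite $\tfrac{1}{\sqrt{n_r}}\|A-C\| = \tfrac{1}{\sqrt k}\Delta_r$ using the non-degenerate assumption $\Delta_r = \tfrac{\sqrt k}{\sqrt{n_r}}\|A-C\|$, giving $\|\mu(S_r)-\mu_r\| = O(1/c)\tfrac{1}{\sqrt{n_r}}\|A-C\| = O\!\big(\tfrac{1}{c\sqrt k}\Delta_r\big)$.

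The main obstacle is the $\rho_{in}$ bound, specifically Proposition~\ref{pro:dist_hat_A_i_to_hat_mu_r}: one must verify that the $\tfrac13$-closeness to $\nu_r$ genuinely upgrades to (true-center) misclassification in the projected space, and then that the correct "budget" to charge these misclassifications against is one that carries the extra $1/k$ — i.e. that we are comparing $\hat A_i$ to $\hat\mu_s$ (benefiting from $\|\mu_s - \hat\mu_s\|\le \Delta_s/\sqrt k$) rather than merely to $\mu_s$. Getting the bookkeeping right so that a single clean application of the main lemma of Section~\ref{sec:step_3} (Lemma~\ref{lem:small_distance_between_centers_yields_few_misclassifications}, with its $4$-dimensional projection) suffices — rather than the iterative Lloyd argument of Kumar–Kannan — is the delicate part.
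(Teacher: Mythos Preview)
Your overall plan is the paper's: apply Fact~\ref{fct:dist_centers_depends_on_pts} after bounding $\rho_{out}=O(1/c^2)$ (this is Lemma~\ref{clm:S_r_cap_T_r_is_big}) and $\rho_{in}(s)$ via Proposition~\ref{pro:dist_hat_A_i_to_hat_mu_r} together with the Main Lemma applied at $\zeta_r=\hat\mu_r,\zeta_s=\hat\mu_s$ and the Achlioptas--McSherry bound $\|\mu_r-\hat\mu_r\|\le \Delta_r/\sqrt k$ (this is Lemma~\ref{clm:S_r_get_few_pts_from_T_s}). So the ingredients and the route are right.

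Your paragraph-2 heuristic, however, is not the mechanism and would not close on its own. Saying each $i\in T_s\cap S_r$ sits at distance $\Omega(c(\Delta_r+\Delta_s))$ from $\hat\mu_s$ and then charging against a ``smaller budget'' yields at best $|T_{s\to r}|=O(\min\{n_r,n_s\}/(c^2k))$, hence $\sum_{s}\sqrt{\rho_{in}(s)}=O(\sqrt k/c)$, and Fact~\ref{fct:dist_centers_depends_on_pts} then leaves an unwanted $\sqrt k$. The Main Lemma does something stronger: comparing the upper and lower bounds on $\|v_i-\zeta_s\|^2-\|v_i-\zeta_r\|^2$ forces the $4$-dimensional projection $v_i$ to be at distance $\Omega\bigl(\tfrac{\beta c^2}{\alpha}(\Delta_r+\Delta_s)\bigr)$ from its own center --- an extra factor $c/\alpha=c\sqrt k$ beyond your $\Omega(c\Delta)$ --- and this, combined with the rank-$4$ budget $\|P_{\mathcal V}(A-C)\|_F^2\le 4\|A-C\|^2$, gives $|T_{s\to r}|=O(\alpha^2/(\beta^2 c^4 k))\min\{n_r,n_s\}=O(n_r/(c^4k^2))$. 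Two smaller points you glossed over: Proposition~\ref{pro:dist_hat_A_i_to_hat_mu_r} yields the \emph{factor-two} conclusion $\|\hat A_i-\hat\mu_s\|>2\|\hat A_i-\hat\mu_r\|$ (not mere misclassification), which is what feeds a constant $\beta=1/3$ into the Main Lemma; and the side hypothesis of Fact~\ref{fct:dist_centers_depends_on_pts} on added points must be checked --- the paper does this by noting $S_r\subset Z_r$ and invoking the bound $\|A_i-\mu_r\|\le(1+O(1/c))\|A_i-\mu_s\|$ from the proof of Theorem~\ref{thm:small_cost}.
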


The proof of Theorem~\ref{thm:centers_of_step_3} is an immediate application of Fact~\ref{fct:dist_centers_depends_on_pts} combined with the following two lemmas, that bound the number of misclassified points. Observe that for every point that belongs to $T_s$ yet is assigned to $S_r$ (for $s\neq r$) is also assigned to $Z_r$ in the clustering $\mathcal{Z}$ discussed in the previous section. Therefore, any misclassified point $i \in T_s\cap S_r$ satisfies that $\|A_i - \mu_r\| \leq (1+O(c^{-1}))\|A_i - \mu_s\|$ as the proof of Theorem~\ref{thm:small_cost} shows. So all conditions of Fact~\ref{fct:dist_centers_depends_on_pts} hold.

\begin{lemma}
\label{clm:S_r_cap_T_r_is_big}
Assume that for every $r$ we have that $\|\mu_r - \nu_r\|\leq 6 \Delta_r$. Then at most $\frac {512}{c^2} n_r$ points of $T_r$ do not belong to $S_r$.
\end{lemma}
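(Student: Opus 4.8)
The plan is a direct combination of Fact~\ref{fct:hatA_vs_C} with two applications of the triangle inequality, followed by a Markov-type counting argument. Call a point $i\in T_r$ \emph{bad} if $i\notin S_r$, i.e.\ if there is some $s\neq r$ with $3\|\hat A_i-\nu_r\| > \|\hat A_i-\nu_s\|$. The goal is to show that every bad point must lie far from its own center $\mu_r$ in the projected picture, and then use the bound $\|\hat A-C\|_F^2\le 8 n_r\Delta_r^2$ to cap how many such points there can be. Note that $S_r$ need not form a partition, so ``bad'' just means ``omitted from $S_r$'', and the entire count stays inside $T_r$.

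First I would lower bound the separation of the approximate centers. Combining the hypothesis $\|\mu_r-\nu_r\|\le 6\Delta_r$ with the center separation~\eqref{eq:means_separation},
\[ \|\nu_r-\nu_s\| \ge \|\mu_r-\mu_s\| - \|\mu_r-\nu_r\| - \|\mu_s-\nu_s\| \ge c(\Delta_r+\Delta_s) - 6(\Delta_r+\Delta_s) = (c-6)(\Delta_r+\Delta_s). \]
Next, take a bad point $i\in T_r$ with witness $s$. The triangle inequality $\|\hat A_i-\nu_s\|\ge \|\nu_r-\nu_s\| - \|\hat A_i-\nu_r\|$ together with $3\|\hat A_i-\nu_r\| > \|\hat A_i-\nu_s\|$ gives $4\|\hat A_i-\nu_r\| > \|\nu_r-\nu_s\| \ge (c-6)\Delta_r$, hence $\|\hat A_i-\nu_r\| > \tfrac{c-6}{4}\Delta_r$. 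One more application of the triangle inequality yields
\[ \|\hat A_i-\mu_r\| \ge \|\hat A_i-\nu_r\| - \|\mu_r-\nu_r\| > \frac{c-6}{4}\Delta_r - 6\Delta_r = \frac{c-30}{4}\Delta_r \ge \frac{c}{8}\Delta_r, \]
where the last inequality uses that $c$ is a large constant (any $c\ge 60$, in particular $c=100$, works).

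Finally I would invoke Fact~\ref{fct:hatA_vs_C}: since $C_i=\mu_r$ for every $i\in T_r$, we have $\sum_{i\in T_r}\|\hat A_i-\mu_r\|^2 \le \|\hat A-C\|_F^2 \le 8 n_r\Delta_r^2$. Letting $m$ be the number of bad points in $T_r$, each of which contributes more than $\tfrac{c^2}{64}\Delta_r^2$ to the left-hand side, we get $\tfrac{c^2}{64}\Delta_r^2\cdot m \le 8 n_r\Delta_r^2$, so $m \le \tfrac{512}{c^2}n_r$, which is exactly the claimed bound.

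There is no genuine obstacle here; the only point that needs care is bookkeeping of constants. In particular, the factor $\tfrac13$ in the definition of $S_r$ (rather than $\tfrac12$) is what produces the clean coefficient $4$ in $4\|\hat A_i-\nu_r\| > \|\nu_r-\nu_s\|$, and hence the final constant $512$; and one should remember to use the bound $\|\hat A-C\|_F^2 \le 8 n_r\Delta_r^2$ in the form that holds for each $r$ separately.
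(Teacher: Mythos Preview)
Your proof is correct and follows essentially the same approach as the paper: both arguments show that any $i\in T_r\setminus S_r$ must satisfy $\|\hat A_i-\mu_r\|>\tfrac{c}{8}\Delta_r$ via triangle-inequality manipulations (you go through $\|\nu_r-\nu_s\|$, the paper goes directly through $\|\mu_r-\mu_s\|$, which is a cosmetic difference), and then apply Fact~\ref{fct:hatA_vs_C} as a Markov-type bound to get the $\tfrac{512}{c^2}n_r$ count. The only discrepancy is the constant threshold on $c$ (you need $c\ge 60$, the paper gets by with $c>48$), which is immaterial given the standing assumption that $c$ is large.
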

\begin{lemma}
\label{clm:S_r_get_few_pts_from_T_s}
Redefine $T_{s\to r}$ as the set $T_s \cap S_r$. Assume that for every $r$ we have that $\|\mu_r - \nu_r\|\leq 6 \Delta_r$. Then for every $r$ and every $s\neq r$ we have that $|T_{s\to r}| = \left(\frac {48^2}{c^4k^2}\right)n_r$.
\end{lemma}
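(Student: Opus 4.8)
The plan is to prove Lemma~\ref{clm:S_r_get_few_pts_from_T_s} in two stages, as sketched in Section~\ref{subsec:discussion}. Fix $s\neq r$, write $T_{s\to r}=T_s\cap S_r$, and recall that in the present (non-degenerate) regime $\Delta_r^2=\tfrac{k}{n_r}\|A-C\|^2$. Stage~I is the ``Proposition'' (Proposition~\ref{pro:dist_hat_A_i_to_hat_mu_r}), which is the real crux: every $i\in T_{s\to r}$ is in fact misclassified by the \emph{projected} true centers, i.e. $\|\hat A_i-\hat\mu_r\|\le\tfrac12\|\hat A_i-\hat\mu_s\|$. The input ingredients are (i)~membership in $S_r$, giving $\|\hat A_i-\nu_r\|\le\tfrac13\|\hat A_i-\nu_s\|$ — the $\tfrac13$ slack is exactly what buys room to absorb the error terms; (ii)~Fact~\ref{fct:matching_true_and_SVD_centers}: $\|\mu_r-\nu_r\|\le6\Delta_r$, $\|\mu_s-\nu_s\|\le6\Delta_s$; (iii)~the quantitative gain, $\|\mu_r-\hat\mu_r\|\le\tfrac{2}{\sqrt k}\Delta_r$, which is $\sqrt k$ times better than (ii). Bound~(iii) comes from a two-line spectral argument: $C_r-\hat C_r$ has rank one (every row is $\mu_r-\hat\mu_r$), so $\|C_r-\hat C_r\|=\|C_r-\hat C_r\|_F=\sqrt{n_r}\,\|\mu_r-\hat\mu_r\|$, and this is $\le\|C-\hat C\|\le\|C-\hat A\|+\|\hat A-\hat C\|\le2\|A-C\|$ since $\hat A$ is the best rank-$k$ approximation of $A$ and $\hat C=PC$ is a contraction of $C$ ($P$ the top-$k$ projector). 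Chaining (i)--(iii) through the triangle inequality, and using center separation~\eqref{eq:means_separation} — which forces $\|\hat A_i-\hat\mu_s\|=\Omega(c(\Delta_r+\Delta_s))$ so that each additive $O(\Delta_r+\Delta_s)$ error is only an $O(1/c)$ fraction of it — yields the Proposition, and along the way the lower bound $\|\hat\mu_r-\hat\mu_s\|\ge(c-o(1))(\Delta_r+\Delta_s)$.

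Stage~II turns ``$\hat A_i$ is misclassified by $\hat\mu_r$'' into a count. Project $\hat A_i$ onto the line through $\hat\mu_r$ and $\hat\mu_s$; from $\|\hat A_i-\hat\mu_r\|\le\tfrac12\|\hat A_i-\hat\mu_s\|$ plane geometry gives $\langle\hat A_i-\hat\mu_s,\,v\rangle\ge\tfrac23\|\hat\mu_r-\hat\mu_s\|$ for the single fixed unit vector $v=(\hat\mu_r-\hat\mu_s)/\|\hat\mu_r-\hat\mu_s\|$, which by Stage~I is $\Omega(c(\Delta_r+\Delta_s))$. The essential point is that $v$ lies in the top-$k$ subspace, so summing over \emph{all} of $T_s$,
\[
\sum_{i\in T_s}\langle\hat A_i-\hat\mu_s,v\rangle^2=\|(\hat A_s-\hat C_s)v\|^2\le\|\hat A_s-\hat C_s\|^2\le\|\hat A-\hat C\|^2=\|P(A-C)\|^2\le\|A-C\|^2 ,
\]
i.e. we pay only the \emph{spectral} norm along one fixed direction rather than a Frobenius norm — this is precisely how the factor of $k$ is saved relative to the Markov-type count of Section~\ref{scn:Point-wise-close}. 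Restricting the sum to $T_{s\to r}$, dividing, and substituting $\|A-C\|^2=\tfrac{n_r}{k}\Delta_r^2$ produces a bound of the form $|T_{s\to r}|=O\!\big(n_r/(c^{\alpha}k^{\beta})\big)$; tracking the constants ($6$ from Fact~\ref{fct:matching_true_and_SVD_centers}, the $\tfrac13$ slack, the geometric $\tfrac23$) is what yields the stated $48^2/(c^4k^2)$. This $\rho_{in}(s)\le 48^2/(c^4k^2)$, together with $\rho_{out}\le512/c^2$ from Lemma~\ref{clm:S_r_cap_T_r_is_big}, then feeds Fact~\ref{fct:dist_centers_depends_on_pts} (whose hypotheses are the ones checked in the paragraph preceding the lemma) to conclude Theorem~\ref{thm:centers_of_step_3}.

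The step I expect to be the main obstacle is making Stage~II genuinely squeeze out the full $1/k^2$ dependence: the naive reading of the spectral bound above, with $(\Delta_r+\Delta_s)^2\ge\Delta_r^2$ and nothing more, only gives $|T_{s\to r}|=O(n_r/(c^2k))$, which is a factor of $k$ short of what Theorem~\ref{thm:centers_of_step_3} requires. Closing that gap is exactly where the $\sqrt k$-closeness of $\hat\mu_r$ to $\mu_r$ — as opposed to the merely constant-closeness of $\nu_r$ — must be leveraged: the misclassified rows $\hat A_i-\hat\mu_s$ ($i\in T_{s\to r}$) are not merely large but are tightly aligned with the single vector $\hat\mu_r-\hat\mu_s$, so the corresponding submatrix of $\hat A_s-\hat C_s$ is essentially rank one, and one has to control cluster $s$'s within-cluster scatter in the inter-center direction more sharply than by the global $\|A-C\|$ in order to recover the missing $k$ (and, with the constants above, an additional $c^2$). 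This interplay between the crude $\nu$-Voronoi structure, the refined $\hat\mu$-geometry, and the low-dimensional spectral bookkeeping is the technically delicate heart of the argument.
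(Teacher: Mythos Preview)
Your Stage~I is essentially the paper's Proposition~\ref{pro:dist_hat_A_i_to_hat_mu_r}, and your bound~(iii) is morally right (the paper's one-liner is $\|\mu_r-\hat\mu_r\|=\tfrac1{n_r}\|(A-\hat A)^T u_r\|\le\tfrac1{\sqrt{n_r}}\|A-\hat A\|\le\tfrac1{\sqrt{n_r}}\|A-C\|$, which is cleaner than going through $C-\hat C$). The gap is exactly where you flag it: your Stage~II line-projection argument really does stall at $|T_{s\to r}|=O(n_r/(c^2k))$, and your proposed fix---exploiting ``rank-one alignment'' of the misclassified rows to sharpen the spectral sum---is not the mechanism that recovers the missing $c^2k$.

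What the paper actually does (Lemma~\ref{lem:small_distance_between_centers_yields_few_misclassifications}) is different in kind. Project $A_i$ onto the $4$-dimensional span $\mathcal V$ of $\{\mu_r,\mu_s,\hat\mu_r,\hat\mu_s\}$, call the image $v_i$, and compare upper and lower bounds on the \emph{difference of squared distances} $\|v_i-\hat\mu_s\|^2-\|v_i-\hat\mu_r\|^2$. The lower bound, from your Stage~I conclusion, is $\tfrac13\|\hat\mu_r-\hat\mu_s\|^2=\Omega\bigl(c^2(\Delta_r+\Delta_s)^2\bigr)$. For the upper bound one uses $\|v_i-\mu_s\|\le\|v_i-\mu_r\|$ together with $\|\mu_t-\hat\mu_t\|\le\alpha\Delta_t$ (here $\alpha=1/\sqrt k$) to get
\[
\|v_i-\hat\mu_s\|^2-\|v_i-\hat\mu_r\|^2 \;\le\; 4\alpha(\Delta_r+\Delta_s)\,\|v_i-\mu_s\|.
\]
The point is that $\alpha$ sits \emph{multiplicatively} in the upper bound, so comparing the two forces $\|v_i-\mu_s\|=\Omega\bigl(\tfrac{c^2}{\alpha}(\Delta_r+\Delta_s)\bigr)=\Omega\bigl(c^2\sqrt k\,(\Delta_r+\Delta_s)\bigr)$---an extra factor of $c\sqrt k$ beyond the $c(\Delta_r+\Delta_s)$ your line projection delivers. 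Now Markov against $\|P_{\mathcal V}(A-C)\|_F^2\le 4\|A-C\|^2$ (rank $\le 4$) gives $|T_{s\to r}|\le 256\,\alpha^2\beta^{-2}c^{-4}k^{-1}\min\{n_r,n_s\}$, and plugging $\alpha=1/\sqrt k$, $\beta=1/3$ yields $48^2/(c^4k^2)\cdot n_r$. So the $\sqrt k$-closeness of $\hat\mu$ to $\mu$ is leveraged not through any rank-one structure of the misclassified block, but by making the upper bound on the squared-distance gap tiny, which forces each misclassified point individually to sit very far (by the extra $c\sqrt k$) from its own true center.
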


\begin{proof}[\textbf{Proof of Lemma~\ref{clm:S_r_cap_T_r_is_big}}]
First, we claim that if $i$ is such that $\|\hat{A}_i - \mu_r\| \leq \frac c 8 \Delta_r$, then it must be the case that $i \in S_r$.

This is a simple consequence of the triangle inequality, bounding $\|\hat{A}_i - \nu_r\| \leq \|\hat{A}_i - \mu_r\| + \|\mu_r - \nu_r\| \leq ((c/8)+6)\Delta_r$. Yet, for every $s\neq r$, the triangle inequality gives that $\|\hat{A}_i - \nu_s\| \geq \|\mu_r -\mu_s\| - \|\hat{A}_i - \mu_r\| - \|\mu_s - \nu_s\| \geq (c-\frac c 8 - 6)(\Delta_r +\Delta_s)$. Assuming $c>48$, we have that $\|\hat{A}_i-\nu_s\| \geq 3 \|\hat A_i -\nu_r\|$.

All that's left is to show that the number of $i\in T_r$ s.t. $\|\hat A_i - \mu_r\| > \frac c 8 \Delta_r$ is small. This again follows from the Markov inequality: Since $\|\hat A - C\|_F^2\leq 8k\|A-C\|^2$, then the number of such points is at most $\frac {8k\|A-C\|^2} {(c^2/64)k\|A-C\|^2} n_r$.
\end{proof}

We now turn to proving Lemma~\ref{clm:S_r_get_few_pts_from_T_s}. The general outline of the proof of Lemma~\ref{clm:S_r_get_few_pts_from_T_s} resembles to the outline of the proof of Lemma~\ref{clm:S_r_cap_T_r_is_big}. 
Proposition~\ref{pro:dist_hat_A_i_to_hat_mu_r} exhibit some property that every point in $T_{s\to r}$ must satisfy, and then we show that only few of the points in $T_s$ satisfy this property. 
Recall that $\hat{\mu_r}$ indicates the projection of $\mu_r$ onto the subspace spanned by the top $k$-singular vectors of $A$.
\vspace{-0.05in}
\begin{proposition}
\label{pro:dist_hat_A_i_to_hat_mu_r}
Fix $i\in T_s$ s.t. $\|\hat A_i - \hat{\mu}_s\| \leq 2\|\hat{A}_i-\hat{\mu}_r\|$. Then $\|\hat A_i - {\nu}_s\| < 3\|\hat{A}_i-{\nu}_r\|$, so $i \notin S_r$.
\end{proposition}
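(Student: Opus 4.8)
The guiding idea is to reduce the desired statement about the data-dependent centers $\nu_r,\nu_s$ (of which we only know that they lie within $6\Delta$ of the true centers, via Fact~\ref{fct:matching_true_and_SVD_centers}) to a statement about the \emph{projected} true centers $\hat{\mu}_r,\hat{\mu}_s$, where the hypothesis lives. Write $D=\|\hat{A}_i-\hat{\mu}_r\|$. Passing from the $\hat{\mu}$'s to the $\nu$'s only perturbs every relevant distance by an additive $O(\Delta_r+\Delta_s)$, so it suffices first to show that $D$ itself is \emph{large}, of order $c(\Delta_r+\Delta_s)$; this will dwarf those perturbations and leave the factor-$3$ gap with room to spare.

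The first step is to establish that $\hat{\mu}_r$ and $\hat{\mu}_s$ are far apart. Projection onto the top-$k$ singular subspace is $1$-Lipschitz, so this is not automatic; it is exactly where I invoke the bound $\|\mu_r-\hat{\mu}_r\|\le\tfrac1{\sqrt k}\Delta_r$ from \cite{AchlioptasM05} together with center separation~\eqref{eq:means_separation}, which via the triangle inequality gives $\|\hat{\mu}_r-\hat{\mu}_s\|\ge\|\mu_r-\mu_s\|-\|\mu_r-\hat{\mu}_r\|-\|\mu_s-\hat{\mu}_s\|\ge(c-1)(\Delta_r+\Delta_s)$. On the other hand, the hypothesis $\|\hat{A}_i-\hat{\mu}_s\|\le 2\|\hat{A}_i-\hat{\mu}_r\|$ and the triangle inequality give $\|\hat{\mu}_r-\hat{\mu}_s\|\le\|\hat{A}_i-\hat{\mu}_r\|+\|\hat{A}_i-\hat{\mu}_s\|\le 3D$. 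Combining the two bounds yields $D\ge\tfrac{c-1}{3}(\Delta_r+\Delta_s)$.

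The second step passes from the $\hat{\mu}$'s to the $\nu$'s. Since $\nu_r$ lies in the span of the top-$k$ singular vectors of $A$ (the centers output by the approximation algorithm on $\hat A$ may be taken in that subspace), projecting $\mu_r-\nu_r$ can only shrink it, so $\|\hat{\mu}_r-\nu_r\|\le\|\mu_r-\nu_r\|\le 6\Delta_r$ and likewise $\|\hat{\mu}_s-\nu_s\|\le 6\Delta_s$. The triangle inequality then gives $\|\hat{A}_i-\nu_s\|\le 2D+6\Delta_s$ and $\|\hat{A}_i-\nu_r\|\ge D-6\Delta_r$, so it suffices to verify $2D+6\Delta_s<3(D-6\Delta_r)$, i.e.\ $D>18\Delta_r+6\Delta_s$. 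Plugging in $D\ge\tfrac{c-1}{3}(\Delta_r+\Delta_s)$, this holds for every sufficiently large constant $c$ (already $c\ge 64$ works). Hence $\|\hat{A}_i-\nu_s\|<3\|\hat{A}_i-\nu_r\|$, and since membership in $S_r$ requires $3\|\hat{A}_i-\nu_r\|\le\|\hat{A}_i-\nu_s\|$ for every $s\neq r$, this $s$ already rules $i$ out, so $i\notin S_r$.

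The only genuinely delicate point is the first step: one cannot bound $\|\hat{\mu}_r-\hat{\mu}_s\|$ from below by elementary means, since a priori $\mu_r-\mu_s$ could be nearly orthogonal to the chosen subspace, and the spectral estimate $\|\mu_r-\hat{\mu}_r\|\le\tfrac1{\sqrt k}\Delta_r$ — which is precisely the improvement that lets Part II beat Part I's cruder $6\Delta_r$ bound — is what excludes this. Everything else is triangle inequalities plus absorbing $O(\Delta)$ corrections into $\Omega(c\Delta)$, with the constants comfortably slack at the working value $c=100$.
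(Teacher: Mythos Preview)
Your proof is correct and follows the same overall strategy as the paper: first lower-bound $D=\|\hat A_i-\hat\mu_r\|$ by showing $\hat\mu_r$ and $\hat\mu_s$ are far apart (via center separation plus a bound on $\|\mu_r-\hat\mu_r\|$), then transfer to $\nu_r,\nu_s$ by triangle inequalities using $\|\hat\mu_r-\nu_r\|\le\|\mu_r-\nu_r\|\le 6\Delta_r$. The paper wraps the second step as a contradiction while you argue directly, but the arithmetic is the same.

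One point deserves comment. You assert that the lower bound on $\|\hat\mu_r-\hat\mu_s\|$ ``cannot be obtained by elementary means'' and that the Achlioptas--McSherry estimate $\|\mu_r-\hat\mu_r\|\le\tfrac1{\sqrt k}\Delta_r$ is what makes it work. The paper's proof shows this is not so: it avoids AM05 entirely here and instead routes through $\nu_r$, using
\[
\|\mu_r-\hat\mu_r\|\le\|\mu_r-\nu_r\|+\|\nu_r-\hat\mu_r\|\le 6\Delta_r+6\Delta_r=12\Delta_r,
\]
where the second term is bounded by the very projection property you already invoked. This yields $\|\hat\mu_r-\hat\mu_s\|\ge(c-12)(\Delta_r+\Delta_s)$, hence $D\ge\tfrac{c-12}{3}(\Delta_r+\Delta_s)$, and the rest goes through for $c>60$. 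So Proposition~\ref{pro:dist_hat_A_i_to_hat_mu_r} needs only Fact~\ref{fct:matching_true_and_SVD_centers}; the sharper AM05 bound is first genuinely required later, in the proof of Lemma~\ref{clm:S_r_get_few_pts_from_T_s}, where the $\tfrac1{\sqrt k}$ factor is what drives $\rho_{in}(s)$ down to $O(c^{-4}k^{-2})$. Your use of AM05 is not wrong --- it gives a slightly better constant --- but it is not the crux you make it out to be.
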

%
%
\begin{proof}
First, for every $r$ we have that $\|\hat{\mu_r}-\nu_r\| \leq \|\mu_r - \nu_r\|\leq 6\Delta_r$, as $\hat{\mu_r}-\nu_r$ is a projection of $\mu_r - \nu_r$. 

Let us fiddle with the triangle inequality, in order to obtain a lower bound on $\|\hat{A}_i - \nu_r\|$. We have that $3\|\hat A_i  -\hat \mu_r\| \geq \|\hat \mu_r - \hat \mu_s\| \geq \|\mu_r - \mu_s\| - \bigl(\|\mu_r - \nu_r\| + \|\nu_r -\hat\mu_r\|\bigr) - \bigl(\|\mu_s - \nu_s\| + \|\nu_s -\hat\mu_s\|\bigr) \geq (c-12) (\Delta_r + \Delta_s)$, thus $\|\hat{A}_i - \nu_r\| \geq \left(\frac {c-12}3 - 6\right)(\Delta_r + \Delta_s)$.

Assume for the sake of contradiction that $\|\hat A_i - {\nu}_s\| \geq 3\|\hat{A}_i-{\nu}_r\|$, and let us show this yields an upper bound on $\|\hat{A}_i - \nu_r\|$, which contradicts our lower bound. We have that
\[6\Delta_s\geq \|\hat A_i - {\nu}_s\| - \|\hat A_i - \hat{\mu}_s\| \geq 3\|\hat{A}_i-{\nu}_r\| - 2\|\hat A_i - \hat{\mu}_r\| \geq \|\hat{A}_i - \nu_r\| - 2\cdot 6\Delta_r\] It follows that $12(\Delta_r + \Delta_s) \geq \|\hat{A}_i - \nu_r\| \geq \left(\frac {c-12}3 - 6\right)(\Delta_r + \Delta_s)$. Contradiction ($c>60$).
\end{proof}
Proposition~\ref{pro:dist_hat_A_i_to_hat_mu_r}, shows that in order to bound $|T_{s\to r}|$ it suffices to bound the number of points in $T_s$ satisfying $\|\hat A_i - \hat{\mu}_s\| \geq 2\|\hat{A}_i-\hat{\mu}_r\|$. The major tool in providing this bound is the following technical lemma. This lemma is a variation on the work of~\cite{KumarK10}, on which we improve on the dependency on $k$ and simplify the proof.
\begin{lemma}[Main Lemma]
\label{lem:small_distance_between_centers_yields_few_misclassifications}
Fix $\alpha,\beta>0$. Fix $r\neq s$ and let $\zeta_r$ and $\zeta_s$ be two points s.t. $\|\mu_r - \zeta_r\|\leq \alpha \Delta_r$ and $\|\mu_s - \zeta_s\|\leq \alpha \Delta_s$. We denote $\tilde{A}_i$ as the projection of $A_i$ onto the line connecting $\zeta_r$ and $\zeta_s$. Define $X = \left\{i \in T_s\ : \  \|\tilde{A}_i-\zeta_s\|-\|\tilde{A}_i-\zeta_r\| \geq \beta \|\zeta_s - \zeta_r\| \right\}$. Then $|X| \leq 256 \frac {\alpha^2} {\beta^2} \ \frac 1 {c^4 k}\bigr(\min\left\{n_r, n_s\right\}\bigl)$.
\end{lemma}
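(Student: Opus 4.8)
The plan is to convert the two–point, full–dimensional condition defining $X$ into a one–dimensional statement about a single coordinate of $A_i$, and then to bound $|X|$ by a Markov–type argument against $\|A-C\|$. Set $L=\|\zeta_r-\zeta_s\|$ and $v=(\zeta_r-\zeta_s)/L$; the orthogonal projection of $A_i$ onto the line is $\tilde A_i=\zeta_s+t_iv$ with $t_i=\langle A_i-\zeta_s,v\rangle$, so that $\|\tilde A_i-\zeta_s\|=|t_i|$ and $\|\tilde A_i-\zeta_r\|=|t_i-L|$. The function $t\mapsto|t|-|t-L|$ is nondecreasing, is bounded by $L$, and equals $\beta L$ at $t=\tfrac{1+\beta}{2}L$; hence (assuming $\beta\le 1$, as otherwise $X=\emptyset$) we have $i\in X$ if and only if $\langle A_i-\zeta_s,v\rangle\ge\tfrac{1+\beta}{2}L$.

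Next I would remove the proxies: since $\|\zeta_s-\mu_s\|\le\alpha\Delta_s$, every $i\in X$ satisfies $\langle A_i-\mu_s,v\rangle\ge\tfrac{1+\beta}{2}L-\alpha\Delta_s$, and the center–separation hypothesis together with $\|\mu_r-\zeta_r\|\le\alpha\Delta_r$ and $\|\mu_s-\zeta_s\|\le\alpha\Delta_s$ gives $L\ge\|\mu_r-\mu_s\|-\alpha(\Delta_r+\Delta_s)\ge(c-\alpha)(\Delta_r+\Delta_s)$; since $c$ is a large constant this forces $\langle A_i-\mu_s,v\rangle=\Omega\!\big(\beta c(\Delta_r+\Delta_s)\big)$ for each $i\in X$. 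The point is that this displacement lies along the \emph{fixed} unit direction $v$, so the displacements accumulate against the spectral norm rather than the Frobenius norm: the entries of the vector $(A-C)v$ indexed by $T_s$ are exactly the numbers $\langle A_i-\mu_s,v\rangle$, and therefore
\[
|X|\cdot\Omega\!\big(\beta c(\Delta_r+\Delta_s)\big)^2\ \le\ \sum_{i\in T_s}\langle A_i-\mu_s,v\rangle^2\ \le\ \|(A-C)v\|^2\ \le\ \|A-C\|^2 .
\]
With the non–degenerate normalization $n_r\Delta_r^2=n_s\Delta_s^2=k\|A-C\|^2$, so that $(\Delta_r+\Delta_s)^2\ge\max\{\Delta_r,\Delta_s\}^2=k\|A-C\|^2/\min\{n_r,n_s\}$, this already produces a bound of the form $|X|=O\!\big(1/(\beta^2c^2k)\big)\cdot\min\{n_r,n_s\}$.

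To sharpen this to the bound in the statement, two points must be handled, and the second is the main obstacle. First, to avoid paying a spurious factor of $k$ — Kumar and Kannan carry out the analogous counting inside the $k$–dimensional span of all $k$ centers — I would work inside the (at most) $4$–dimensional affine subspace spanned by $\mu_r,\mu_s,\zeta_r,\zeta_s$: projecting the rows of $A-C$ onto it leaves the coordinate $\langle\cdot,v\rangle$ unchanged while the projected matrix has rank $O(1)$, so its Frobenius and spectral norms are comparable up to a constant and no factor of $k$ is lost. Second, one must bring out the factor $\alpha^2$; it is precisely this factor that makes it profitable to apply the lemma with $\zeta_r=\hat\mu_r$ (where $\alpha=1/\sqrt k$, since $\|\mu_r-\hat\mu_r\|\le\Delta_r/\sqrt k$) rather than with the $10$–approximation centers $\nu_r$ (where $\alpha=\Theta(1)$). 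My approach would be to measure the excursion of each $i\in X$ relative to the proposed center $\zeta_s$ rather than to $\mu_s$, and to exploit that, within the $O(1)$–dimensional subspace above, the discrepancy between the $\mu$'s and the $\zeta$'s enters only quadratically in $\alpha$ — so that the total squared excursion that has to be charged against $\|A-C\|^2$ is itself an $O(\alpha^2)$ quantity. Identifying the exact quantity that is, per point of $X$, at least a constant times $\beta^2c^2(\Delta_r+\Delta_s)^2$ while being, summed over $T_s$, only an $O(\alpha^2)$ fraction of the budget, is the delicate bookkeeping, and carrying it out so as to land on $256\,\alpha^2/(\beta^2c^4k)\cdot\min\{n_r,n_s\}$ is the step I expect to be hardest to push through with the claimed constants.
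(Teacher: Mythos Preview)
Your outline is correct up through the $O\bigl(1/(\beta^2c^2k)\bigr)$ bound, and you have correctly located the gap: you never identify the quantity that delivers the extra $\alpha^2/c^2$. The paper closes exactly this gap with one short algebraic idea that your proposal does not contain.

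The paper does \emph{not} work with the single coordinate $\langle A_i-\mu_s,v\rangle$. Instead, inside the $4$--dimensional subspace $\mathcal V=\mathrm{span}\{\mu_r,\mu_s,\zeta_r,\zeta_s\}$ (write $v_i=P_{\mathcal V}A_i$), it looks at the difference of squared distances
\[
D_i \;=\; \|v_i-\zeta_s\|^2-\|v_i-\zeta_r\|^2 .
\]
From the definition of $X$ one gets the lower bound $D_i\ge\beta\,\|\zeta_s-\zeta_r\|^2\ge\beta(c-\alpha)^2(\Delta_r+\Delta_s)^2$. For the upper bound, shift $\zeta_s\to\mu_s$ and $\zeta_r\to\mu_r$ by the triangle inequality and then use the one fact your argument never invokes, namely that $i\in T_s$ forces $\|v_i-\mu_s\|\le\|v_i-\mu_r\|$. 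Replacing $\|v_i-\mu_r\|$ by $\|v_i-\mu_s\|$ collapses the difference of squares to
\[
D_i \;\le\; \bigl(\|v_i-\mu_s\|+\alpha(\Delta_r+\Delta_s)\bigr)^2-\bigl(\|v_i-\mu_s\|-\alpha(\Delta_r+\Delta_s)\bigr)^2
\;=\; 4\alpha(\Delta_r+\Delta_s)\,\|v_i-\mu_s\| .
\]
Comparing the two bounds gives, per $i\in X$, $\|v_i-\mu_s\|\ge \tfrac{\beta}{4\alpha}(c-\alpha)^2(\Delta_r+\Delta_s)$, which is exactly the $\Omega(\beta c^2/\alpha)$ per-point excursion you were looking for. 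Now the $4$--dimensional projection is what makes the Markov step go through: the relevant quantity is the full $\|v_i-\mu_s\|$ rather than one coordinate, and $\mathrm{rank}(P_{\mathcal V}(A-C))\le 4$ gives $\sum_{i\in T_s}\|v_i-\mu_s\|^2\le\|P_{\mathcal V}(A-C)\|_F^2\le 4\|A-C\|^2$, yielding the stated $256\,\alpha^2/(\beta^2c^4k)\cdot\min\{n_r,n_s\}$.

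Two remarks on your write-up. First, your one-dimensional bound $\langle A_i-\mu_s,v\rangle\ge\tfrac{1+\beta}{2}L-\alpha\Delta_s$ is already $\Omega\bigl(c(\Delta_r+\Delta_s)\bigr)$ for any $\beta\in(0,1]$, so the $\beta$ in your $\Omega(\beta c(\Delta_r+\Delta_s))$ is spurious; the resulting bound is really $O\bigl(1/(c^2k)\bigr)$, not $O\bigl(1/(\beta^2c^2k)\bigr)$. Second, the role of the $4$--dimensional subspace is not to shave a factor of $k$ relative to your one-coordinate argument (which already avoids $k$ via $\|(A-C)v\|\le\|A-C\|$); it is needed precisely because the difference-of-squares trick produces a lower bound on the full projected distance $\|v_i-\mu_s\|$, and controlling $\sum\|v_i-\mu_s\|^2$ against $\|A-C\|^2$ requires the low rank.
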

\begin{proof}
Let $\mathcal{V}$ be the subspace spanned by the following $4$ vectors: $\{\mu_r, \mu_s, \zeta_r, \zeta_s\}$. Denote $P_{\cal V}$ as the projection onto $\mathcal{V}$. We denote $v_i=P_{\cal V}(A_i)$, and observe that $P_{\cal V}(\mu_r) = \mu_r$, and the same goes for $\mu_s$, $\zeta_r$ and $\zeta_s$. Observe also that, as a projection, $\|P_{\cal V}(A-C)\| \leq \|A-C\|$ (alternatively, $\|P_{\cal V}\| = 1$).

We now make a simple observation. Let $\bar{A}_i$ denote the projection of $A_i$ onto the line connecting $\mu_r$ and $\mu_s$. Now, the inequality $\|A_i - \mu_s\| < \|A_i - \mu_r\|$ holds iff the inequality $\|\bar{A}_i - \mu_s \| \leq \|\bar{A}_i - \mu_r\|$ holds (because $\|A_i - \mu_r\|^2 = \|A_i - \bar{A}_i\|^2 + \|\bar{A}_i - \mu_r\|^2$). Furthermore, such relation holds for any point whose projection on the line connecting $\mu_r$ and $\mu_s$ is identical to $\bar{A}_i$. In particular, if $\mathcal{W}$ is \emph{any} subspace containing $\mu_r$ and $\mu_s$, then the projection of $A_i$ onto $\mathcal{W}$ is closer to $\mu_r$ than to $\mu_s$ iff $A_i$ is closer to $\mu_r$ than to $\mu_s$.
Thus, since $\|A_i -\mu_s\| \leq \|A_i - \mu_r\|$ then $\|v_i - \mu_s\| \leq \|v_i - \mu_r\|$. Furthermore, as $\zeta_r$ and $\zeta_s$ also belong to $\mathcal{V}$, then the projection of $A_i$ onto the line connecting $\zeta_s$ and $\zeta_r$ is identical to the projection of $v_i$ onto the same line (meaning, $\tilde{A}_i = \tilde{v}_i$). So $v_i$ also satisfies the inequality: $\|\tilde{v}_i-\zeta_s\|-\|\tilde{v}_i-\zeta_r\| \geq \beta \|\zeta_s - \zeta_r\|$, and, of course, $\|v_i - \zeta_r\|^2 = \|v_i - \tilde{v}_i\|^2 + \|\tilde{v}_i-\zeta_r\|^2$.

The proof follows from upper- and lower-bounding the term $\|v_i - \zeta_s\|^2 - \|v_i - \zeta_r\|^2$. We've just shown a lower bound, as we have that 
\[\|v_i - \zeta_s\|^2 - \|v_i - \zeta_r\|^2 = \left(\|\tilde{v}_i -\zeta_s\| - \|\tilde{v}_i-\zeta_r\|\right)\ \left(\|\tilde{v}_i -\zeta_s\| + \|\tilde{v}_i-\zeta_r\|\right) \geq \beta \|\zeta_s-\zeta_r\|^2\] The triangle inequality gives that $\|v_i - \zeta_s\| \leq \|v_i - \mu_s\| + \alpha (\Delta_r+\Delta_s)$, and that $\|v_i - \zeta_r\| \geq \|v_i - \mu_r\| - \alpha (\Delta_r+\Delta_s)$, so we have the upper bound of
\begin{align*}\|v_i - \zeta_s\|^2 - \|v_i - \zeta_r\|^2 & \leq \left(\|v_i - \mu_s\| + \alpha (\Delta_r+\Delta_s)\right)^2 - \left(\|v_i - \mu_r\| - \alpha (\Delta_r+\Delta_s)\right)^2 \\ & \leq \left(\|v_i - \mu_r\| + \alpha (\Delta_r+\Delta_s)\right)^2 - \left(\|v_i - \mu_r\| - \alpha (\Delta_r+\Delta_s)\right)^2 \\ & \leq 4\alpha (\Delta_r+\Delta_s)\|v_i - \mu_r\|\end{align*}

Comparing the upper and the lower bound, we have that for any $i\in X$ the distance $\|v_i - \mu_r\| \geq \frac {\beta}{4\alpha} \frac {(c-\alpha)^2(\Delta_r + \Delta_s)^2} {\Delta_r+\Delta_s}$. As $X\subset T_s$, the Markov inequality concludes the proof \[|X| \left(\frac {c^2} 8 \frac \beta \alpha \sqrt k \|A-C\|\right)^2 \frac 1 {\min\{n_r, n_s\}} \leq \sum_{i\in T_s} \|v_i - \mu_s\|^2 \leq \|P_{\cal V}(A-C) \|_F^2 \leq 4\|A-C\|^2\qedhere\] 
\end{proof}


\begin{proof}[\textbf{Proof of Lemma~\ref{clm:S_r_get_few_pts_from_T_s}}]
Every $i\in T_{s\to r}$ must satisfy that $\|\hat A_i - \hat{\mu}_s\| \geq 2\|\hat{A}_i-\hat{\mu}_r\|$ (Proposition~\ref{pro:dist_hat_A_i_to_hat_mu_r}). Therefore, we must have that $\|\tilde{A}_i - \hat{\mu}_s\| \geq 2 \|\tilde{A}_i - \hat{\mu_r}\|$, where we denote $\tilde{A}_i$ as the projection of $A$ onto the line connecting $\hat{\mu}_r$ with $\hat{\mu}_s$ (simply because $\|\hat A_i - \hat{\mu}_s\|^2 = \|\hat A_i - \tilde{A}_i\|^2 + \|\tilde A_i - \hat{\mu}_s\|^2$.) Therefore, $\|\hat{\mu}_r - \hat{\mu}_s\| \leq \frac 3 2 \|\tilde{A}_i - \hat{\mu}_s\|$, so $\|\tilde{A}_i - \hat{\mu}_s\|-\|\tilde{A}_i - \hat{\mu}_r\| > \frac 1 3 \|\hat{\mu}_r - \hat{\mu}_s\|$. 

Thus, every $i\in T_{s\to r}$ satisfies the conditions of Lemma~\ref{lem:small_distance_between_centers_yields_few_misclassifications} with $\zeta_r = \hat{\mu}_r, \zeta_s = \hat{\mu}_s$, and $\beta = 1/3$. We deduce the $|T_{s\to r}| \leq \alpha^2~\frac {256\cdot 9} {c^4k} \min\{n_r, n_s\}$, where $\alpha$ is the bound s.t. for every $r$, $\|\mu_r - \hat{\mu}_r\| \leq \alpha \frac {\sqrt{k}}{\sqrt{n_r}}\|A-C\|$. Since $\alpha \leq \frac 1 {\sqrt{k}}$, we conclude the proof.

The fact that $\alpha$ is small was proven by Achlioptas and McSherry (Theorem $1$ of~\cite{AchlioptasM05}). Denote $u_{r}$ as the indicator vector of $T_r$. Since rank$(C) \leq k$, we get \[\|\mu_r - \hat\mu_r\| = \frac 1 {n_r} \|( A-\hat{A})^T u_{r}\| \leq \frac 1 {n_r} \|u_{r}\|\ \|A - \hat{A}\| \leq  \frac 1 {\sqrt{n_r}} \|A-C\| \qedhere\]
\end{proof}

As an interesting corollary, Theorem~\ref{thm:centers_of_step_3} dictates that for every $r$ we have that $\|\mu_r - \theta_r\| = O(1/c) \|\mu_r - \hat{\mu}_r\|$.

\subsection{The Proximity Condition -- Part III of the Algorithm}
\label{subsec:proximity_condition}

Part II of our algorithm returns centers $\theta_1, \ldots, \theta_k$ which are $O(\frac 1 {c\sqrt{n_r}}) \|A-C\|$ close to the true centers. Suppose we use these centers to cluster the points: $\Theta_s = \{i:\ \forall s', \ \|A_i - \theta_s\| \leq \|A_i - \theta_{s'}\|\}$. It is evident that this clustering correctly classifies the majority of the points. It correctly classifies any point $i \in T_s$ with $\|A_i - \mu_r\| - \|A_i - \mu_s\| = \Omega(\frac 1 {c\sqrt{n_r}}) \|A-C\|$ for every $r\neq s$, and the analysis of Theorem~\ref{thm:point_wise_close} shows that at most $O(c^{-2})$-fraction of the points do not satisfy this condition. In order to have a direct comparison with the Kumar-Kannan analysis, we now bound the number of misclassified points w.r.t the fraction of points satisfying the Kumar-Kannan proximity condition.

\begin{definition}
Denote $gap_{r,s} =  (\frac 1 {\sqrt{n_r}} + \frac 1 {\sqrt{n_s}}) \|A-C\|$. Call a point $i\in T_s$ \emph{$\gamma$-good}, if for every $r\neq s$ we have that the projection of $A_i$ onto the line connecting $\mu_r$ and $\mu_s$, denoted $\bar{A}_i$, satisfies that $\|\bar A_i - \mu_r\| - \|\bar A_i - \mu_s\| \geq \gamma\ gap_{r,s}$; otherwise we say the point is \emph{$\gamma$-bad}.
\end{definition}

\begin{corollary}
\label{cor:misclassifying_points}
If the number of $\gamma$-bad points is $\epsilon n$, then (a) the clustering $\{\Theta_1, \ldots, \Theta_k\}$ misclassifies no more than $\left(\epsilon + \frac {O(1)} {\gamma^2c^4}\right)n$ points, and (b) $\epsilon < O\left(( c-\tfrac \gamma {\sqrt k})^{-2}\right)$, assuming $\gamma< c {\sqrt k}$.
\end{corollary}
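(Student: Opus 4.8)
The plan is to prove part~(a) by the same "charging to the $k$-means cost" argument used in Theorem~\ref{thm:point_wise_close}, but now applied to the centers $\theta_r$ produced by Part~II rather than to the $\nu_r$'s, exploiting the much stronger guarantee $\|\mu_r - \theta_r\| = O(\tfrac{1}{c\sqrt{n_r}})\|A-C\|$ from Theorem~\ref{thm:centers_of_step_3}. First I would observe that every misclassified point $i\in T_s$ with $i\in\Theta_r$ ($r\neq s$) is either $\gamma$-bad, or it is $\gamma$-good but still gets misassigned because $\theta_r,\theta_s$ differ from $\mu_r,\mu_s$. For the second type, I would use the "projection onto the line" observation from the Main Lemma: $i\in\Theta_r$ forces $\|A_i-\theta_r\|\le\|A_i-\theta_s\|$, hence the projection $\bar{A}_i$ onto the $\mu_r$--$\mu_s$ line satisfies a near-equality; combined with $\gamma$-goodness, i.e. $\|\bar A_i-\mu_r\|-\|\bar A_i-\mu_s\|\ge\gamma\, gap_{r,s}$, the triangle inequality with the $O(\tfrac 1{c\sqrt{n_r}}+\tfrac 1{c\sqrt{n_s}})\|A-C\| = O(\tfrac 1 c)\,gap_{r,s}$ bound on $\|\mu_r-\theta_r\|+\|\mu_s-\theta_s\|$ forces $\|\bar A_i-\mu_s\|$ — and therefore $\|A_i-\mu_s\|$ — to be at least $\Omega(\gamma\, gap_{r,s})$. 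Then each such point contributes $\Omega(\gamma^2 gap_{r,s}^2)\ge\Omega(\gamma^2/n_{r})\|A-C\|^2$ to $\|A-C\|_F^2 = \sum_t n_t\Delta_t^2/k\cdot k$; more precisely, summing over $s\to r$ pairs and using $gap_{r,s}^2\ge \tfrac 1{n_s}\|A-C\|^2$ together with $\|A-C\|_F^2\ge k\|A-C\|^2$ (non-degenerate case), a Markov-type count bounds the number of misclassified $\gamma$-good points by $O(1/(\gamma^2 c^4))\cdot n$. Adding the at most $\epsilon n$ $\gamma$-bad points gives part~(a).

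For part~(b), I would bound $\epsilon$ directly: a $\gamma$-bad point $i\in T_s$ has, for some $r\neq s$, $\|\bar A_i-\mu_r\|-\|\bar A_i-\mu_s\| < \gamma\, gap_{r,s}$. Since $\|A_i-\mu_r\|-\|A_i-\mu_s\|$ has the same sign behavior on the $\mu_r$--$\mu_s$ line, and the centers are $c(\Delta_r+\Delta_s)$-separated with $\Delta_t=\sqrt k\,\tfrac 1{\sqrt{n_t}}\|A-C\| = \sqrt k \cdot \tfrac{1}{2}gap$-scale, one gets that $\|\bar A_i-\mu_s\|$ (equivalently $\|\hat A_i - \hat\mu_s\|$ after projecting, since $\|\mu_s-\hat\mu_s\|\le\tfrac 1{\sqrt k}\Delta_s$) must be at least $\Omega\big((c - \tfrac{\gamma}{\sqrt k})(\Delta_r+\Delta_s)\big)$: the center-gap contributes $c(\Delta_r+\Delta_s)$, the goodness-violation subtracts only $\gamma\, gap_{r,s} = \tfrac{\gamma}{\sqrt k}(\Delta_r+\Delta_s)$ worth. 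Each $\gamma$-bad point thus contributes $\Omega\big((c-\tfrac\gamma{\sqrt k})^2(\Delta_r+\Delta_s)^2\big)$ to $\|\hat A - C\|_F^2$, which by Fact~\ref{fct:hatA_vs_C} is at most $8\min\{k\|A-C\|^2,\|A-C\|_F^2\}$; the Markov inequality then yields $\epsilon = O((c-\tfrac\gamma{\sqrt k})^{-2})$.

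The main obstacle I anticipate is bookkeeping the $gap_{r,s}$ versus $\Delta_r+\Delta_s$ conversions cleanly: since $gap_{r,s} = (\tfrac 1{\sqrt{n_r}}+\tfrac 1{\sqrt{n_s}})\|A-C\|$ while $\Delta_r+\Delta_s = \sqrt k\, gap_{r,s}$ in the non-degenerate case, the $\sqrt k$ factors must be tracked carefully so that the final bounds come out independent of $k$ in part~(a) and with the stated $(c-\gamma/\sqrt k)$ dependence in part~(b). A secondary subtlety is that, as in the proof of the Main Lemma, one should phrase the "$\gamma$-good but misclassified" argument in terms of the projection onto the $\mu_r$--$\mu_s$ line (where goodness is defined) and only then pass to the $A_i$'s using the Pythagorean identity $\|A_i-\mu_s\|^2 = \|A_i-\bar A_i\|^2 + \|\bar A_i-\mu_s\|^2$; I do not expect any genuinely new idea beyond what is already in Theorems~\ref{thm:point_wise_close} and~\ref{thm:centers_of_step_3} and the Main Lemma.
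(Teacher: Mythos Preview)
Your outline for part~(b) is essentially the paper's argument: a $\gamma$-bad point $i\in T_s$ has $\|\bar A_i - \mu_r\| - \|\bar A_i - \mu_s\| < \gamma\,gap_{r,s}$ while $\|\bar A_i - \mu_r\| + \|\bar A_i - \mu_s\| \geq \|\mu_r - \mu_s\| \geq c\sqrt{k}\,gap_{r,s}$, so $\|\hat A_i - \mu_s\| \geq \|\bar A_i - \mu_s\| \geq \tfrac12(c\sqrt k - \gamma)gap_{r,s}$, and Markov against $\|\hat A - C\|_F^2 \leq 8k\|A-C\|^2$ finishes it.

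Part~(a), however, has a genuine gap. You propose to lower-bound $\|A_i - \mu_s\|$ for each $\gamma$-good misclassified point and then charge this to $\|A-C\|_F^2$. This fails on two counts. First, $\gamma$-goodness says $\|\bar A_i - \mu_r\| - \|\bar A_i - \mu_s\| \geq \gamma\,gap_{r,s}$, which gives an \emph{upper} bound on $\|\bar A_i - \mu_s\|$, not a lower bound; a perfectly good point can sit at $\mu_s$ itself. The lower bound must come from combining goodness with misclassification, and the clean way to do that is precisely the computation inside Lemma~\ref{lem:small_distance_between_centers_yields_few_misclassifications}. Second, and more fundamentally, even if you had a per-point lower bound of the form $\|A_i - \mu_s\|^2 \geq \Omega(\cdot)\tfrac{1}{n_s}\|A-C\|^2$, a Markov count against $\|A-C\|_F^2$ does not close: you would need an \emph{upper} bound $\|A-C\|_F^2 \leq O(k)\|A-C\|^2$, but the non-degenerate assumption you cite is the opposite inequality $\|A-C\|_F^2 \geq k\|A-C\|^2$, and no upper bound of this type is available (indeed $\|A-C\|_F^2/\|A-C\|^2$ can be as large as the ambient dimension). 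This is also why the $c^4$ in your final count has no visible source in your sketch.

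The paper avoids this by invoking Lemma~\ref{lem:small_distance_between_centers_yields_few_misclassifications} directly, with $\zeta_r = \theta_r$, $\zeta_s = \theta_s$, $\alpha = O(1/(c\sqrt k))$ from Theorem~\ref{thm:centers_of_step_3}, and $\beta = \Omega(\gamma/(c\sqrt k))$. The crucial step inside that lemma is projecting onto the $4$-dimensional subspace $\mathcal{V} = \mathrm{span}\{\mu_r,\mu_s,\theta_r,\theta_s\}$, where one \emph{does} have $\|P_{\mathcal V}(A-C)\|_F^2 \leq 4\|A-C\|^2$; the Markov count then gives $O(\gamma^{-2}c^{-4}k^{-1})\min\{n_r,n_s\}$ good misclassified points per pair, and summing over pairs yields $O(\gamma^{-2}c^{-4})n$. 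Your sketch gestures at ``the projection onto the line observation from the Main Lemma'' but then charges to the full $\|A-C\|_F^2$; the fix is to retain the low-dimensional projection, i.e.\ to actually apply the Main Lemma rather than extract only part of its proof.
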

\begin{proof}
Clearly, all $\epsilon n$ bad points may be misclassified. In addition, for every $r$ and $s\neq r$, Lemma~\ref{lem:small_distance_between_centers_yields_few_misclassifications} (setting $\zeta_r = \theta_r$, $\zeta_s = \theta_s$, $\alpha = 1/c\sqrt{k}$ and $\beta = \Omega(\gamma/(c\sqrt{k}))$) proves that no more than $O(\gamma^{-2}c^{-2}k^{-1})n_s$ good points can be misclassified. Summing $\sum_{s\neq r} \tfrac 1 k n_s \leq n$, we conclude (a). 

The proof of (b) is similar to the proof of Theorem~\ref{thm:point_wise_close}. We look at the $k$-means cost of $\|\hat{A}-C\|_F^2$. We show that all $\gamma$-bad points contribute a large amount to this cost.

Take $A_i$ to be a $\gamma$-bad point from $T_s$. Projecting it down to the line connecting $\mu_r$ and $\mu_s$, we denote the projection as $\bar{A}_i$. Clearly, $\|\mu_r-\mu_s\| = \|\mu_r - \bar{A}_i\| + \|\bar A_i-\mu_s\| \geq c\sqrt{k} gap_{r,s}$ whereas $\|\mu_r - \bar{A}_i\| - \|\bar A_i-\mu_s\| \leq \gamma gap_{r,s}$. It follows that $\|\hat A_i - \mu_s\| \geq \|\bar A_i-\mu_s\| \geq \tfrac 1 2 (c\sqrt k -\gamma) gap_{r,s} \geq \frac {c\sqrt k -\gamma} {2\sqrt{n_s}}\|A-C\|$. Again, the Markov inequality gives that
\[ \#\{\textrm{bad points from }T_s\} \frac {(c\sqrt k -\gamma)^2} {4n_s}\|A-C\|^2 \leq \|\hat A - C\|_F^2 \leq 8k\|A-C\|^2 \] so from each cluster, only a fraction of $32\left(\frac {\sqrt k} {c\sqrt k - \gamma}\right)^2$ of the points can be bad.
\end{proof}

Observe that Corollary~\ref{cor:misclassifying_points} allows for multiple scaled versions of the proximity condition, based on the magnitude of $\gamma$. In particular, setting $\gamma=1$ we get a proximity condition whose bound is independent of $k$, and still our clustering misclassifies only a small fraction of the points -- at most $O(c^{-2})$ fraction of all points might be misclassified because they are $1$-bad, and no more than a $O(c^{-4})$-fraction of $1$-good points may be misclassified. In addition, if there are no $1$-bad points we show the following theorem. The proof (omitted) merely follows the Kumar-Kannan proof, plugging in the better bounds, provided by Lemma~\ref{lem:small_distance_between_centers_yields_few_misclassifications}.

\begin{theorem}
\label{thm:lloyd-steps}
Assume all data points are $1$-good. That is, for every point $A_i$ that belongs to the target cluster $T_{c(i)}$ and every $s\neq c(i)$, by projecting $A_i$ onto the line connecting $\mu_{c(i)}$ with $\mu_s$ we have that the projected point $\bar A_i$ satisfies $\|\bar A_i - \mu_{c(i)}\| - \|\bar A_i - \mu_s\| = \Omega\left((\frac 1 {\sqrt{n_{c(i)}}} + \frac 1 {\sqrt{n_s}})\right)\|A-C\|$, whereas $\|\mu_{c(i)}-\mu_s\| = \Omega\left(\sqrt{k}(\frac 1 {\sqrt{n_{c(i)}}} + \frac 1 {\sqrt{n_s}})\right)\|A-C\|$.
Then the Lloyd method, starting with $\theta_1, \ldots, \theta_k$, converges to the true centers.
\end{theorem}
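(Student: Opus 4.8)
The plan is to follow the convergence argument of Kumar and Kannan~\cite{KumarK10}, but fed with the sharper counting bound of Lemma~\ref{lem:small_distance_between_centers_yields_few_misclassifications} and started from the high-quality centers $\theta_1,\dots,\theta_k$ produced in Part~II. By Theorem~\ref{thm:centers_of_step_3} the starting centers already satisfy $\|\theta_r-\mu_r\|\le \alpha_0 \tfrac1{\sqrt{n_r}}\|A-C\|$ with $\alpha_0=O(1/c)$. I would carry through the iterative invariant that after the $t$-th Lloyd step the current centers $\theta^{(t)}_1,\dots,\theta^{(t)}_k$ satisfy $\|\theta^{(t)}_r-\mu_r\|\le \alpha_t\tfrac1{\sqrt{n_r}}\|A-C\|$ for every $r$, show that $\alpha_{t+1}$ is a definite fraction of $\alpha_t$, and conclude that once $\alpha_t$ drops below roughly $1/\max_r n_r$ the Voronoi cells $\Theta^{(t)}_r$ coincide exactly with the target clusters $T_r$, so that $\theta^{(t)}_r=\mu(\Theta^{(t)}_r)=\mu_r$ and the algorithm has found $\mathcal T$.

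The inductive step splits into the two relations highlighted in Section~\ref{subsec:discussion}. First, the \textbf{assignment half}: in the Voronoi step with centers $\theta^{(t)}$, I would bound, for each $r\neq s$, the number of points of $T_s$ that land in $\Theta^{(t)}_r$. Since every point is $1$-good, a point $i\in T_s$ misclassified into $\Theta^{(t)}_r$ lies $\Omega(gap_{r,s})$ on the correct side of the $\mu_r$--$\mu_s$ bisector yet on the wrong side of the $\theta^{(t)}_r$--$\theta^{(t)}_s$ bisector; projecting onto $\mathrm{span}\{\mu_r,\mu_s,\theta^{(t)}_r,\theta^{(t)}_s\}$ and comparing the signed distances to the two (nearly parallel) bisectors — exactly as in the proof of Lemma~\ref{lem:small_distance_between_centers_yields_few_misclassifications} — forces the projection of such an $A_i$ to be $\Omega\!\big(\tfrac{\sqrt k}{\alpha_t}\cdot\tfrac1{\sqrt{n_s}}\|A-C\|\big)$ away from $\mu_s$. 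Equivalently, such points fall into the set of Lemma~\ref{lem:small_distance_between_centers_yields_few_misclassifications} taken with $\zeta_r=\theta^{(t)}_r$, $\zeta_s=\theta^{(t)}_s$, $\alpha=O(\alpha_t/\sqrt k)$ and margin $\beta=\Omega(1/(c\sqrt k))$, so that their number is $O(\alpha_t^2/(c^2k))\min\{n_r,n_s\}$. Summing over $r$ this gives $\rho_{out}=O(\alpha_t^2/c^2)$ points leaving each $T_r$ and $\rho_{in}(s)=O(\alpha_t^2/(c^2k))$. Second, the \textbf{update half}: the misclassified points satisfy $\|A_i-\mu_s\|\ge\tfrac23\|A_i-\mu_r\|$ (as in the proof of Theorem~\ref{thm:small_cost}, since every misclassified point is also misclassified in $\mathcal Z$), so Fact~\ref{fct:dist_centers_depends_on_pts} applies to $\Theta^{(t)}_r$ and converts the bounds on $\rho_{out},\rho_{in}$ into a bound on $\|\mu(\Theta^{(t)}_r)-\mu_r\|$, which defines $\alpha_{t+1}$.

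The main obstacle is making this recursion a genuine contraction: Fact~\ref{fct:dist_centers_depends_on_pts} inherently loses a $\sqrt k$ factor when the $\rho_{in}(s)$'s are spread over $k$ clusters, and it is precisely the $\Theta(k)$ improvement of Lemma~\ref{lem:small_distance_between_centers_yields_few_misclassifications} over the analogous lemma of~\cite{KumarK10} (four-dimensional rather than $k$-dimensional projection, hence a $k$ saved in the Markov step) together with the fact that the Part~II warm start is already a $\sqrt k$ factor closer to the truth than a naive analysis would give, that lets the two $\sqrt k$ factors cancel and keeps $\alpha_{t+1}$ strictly below $\alpha_t$. Once the recursion is shown to shrink $\alpha_t$ geometrically, convergence in finitely many steps is automatic: Lloyd's method visits only finitely many clusterings and its $k$-means cost is nonincreasing, so it reaches a fixed point, and by the inductive bound that fixed point can only be $\mathcal T$ itself. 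The remaining work — verifying the $\tfrac23$-ratio and the $\rho_{out},\rho_{in}<\tfrac14$ hypotheses of Fact~\ref{fct:dist_centers_depends_on_pts} at every step, and checking the ``$c$ larger than a constant'' conditions used in the bisector estimates — is routine bookkeeping that mirrors~\cite{KumarK10} with the improved constants plugged in.
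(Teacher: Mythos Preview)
Your proposal is essentially the same approach the paper takes: the paper explicitly omits the proof of Theorem~\ref{thm:lloyd-steps}, stating only that it ``merely follows the Kumar-Kannan proof, plugging in the better bounds, provided by Lemma~\ref{lem:small_distance_between_centers_yields_few_misclassifications}'' --- which is precisely the inductive scheme (assignment half via Lemma~\ref{lem:small_distance_between_centers_yields_few_misclassifications}, update half via Fact~\ref{fct:dist_centers_depends_on_pts}, seeded by the Part~II centers of Theorem~\ref{thm:centers_of_step_3}) that you outline. Your elaboration of the $\sqrt{k}$-cancellation mechanism is accurate and more explicit than the paper itself; the only quibble is your stopping criterion ``$\alpha_t\lesssim 1/\max_r n_r$,'' which is stronger than needed --- once $\alpha_t$ falls below a fixed constant tied to the $1$-good margin, every $1$-good point lands on the correct side of the $\theta_r$--$\theta_s$ bisector and the very next Lloyd step recovers $\mathcal T$ exactly.
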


\section{Applications}
\label{sec:applications}

\paragraph{Clustering a mixture of Gaussians}
 For a mixture of $k$ Gaussians, we quote the suitable results without proof, as the proof is identical to the proof in \cite{KumarK10}. We are given a mixture of $k$ Gaussians, $F_1, \ldots, F_k$, where the standard deviation of each distribution in any direction is at most $\sigma_r$, and the weight of each distribution is $w_r$. We denote $\sigma_{\max} = \max_r \{\sigma_r\}$ and $w_{\min} = \min_r \{w_r\}$.
\begin{theorem}
\label{thm:separating_gaussians}
Suppose we are given a set of $n \gg \frac {d} {w_{\min}}$ samples from a mixture of $k$ Gaussians, such that for every $r\neq s$ it holds that $\|\mu_r - \mu_s \| \geq c \sigma_{\max} \sqrt{\frac {k} {w_{\min}}} ~\emph{poly}\log \left(\frac d {w_{\min}}\right)$. Then w.h.p. these points satisfy the proximity condition.
\end{theorem}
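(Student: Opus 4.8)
The plan is to follow the argument of Kumar and Kannan~\cite{KumarK10} essentially verbatim, reducing Theorem~\ref{thm:separating_gaussians} to two concentration facts: (i) a spectral-norm bound $\|A-C\| = O(\sigma_{\max}\sqrt{n})\cdot\mathrm{polylog}(d/w_{\min})$, and (ii) a one-dimensional Gaussian tail bound applied to each of the $n$ samples. Together these yield both that the center separation \eqref{eq:means_separation} holds and that every sample is $1$-good in the sense of Section~\ref{subsec:proximity_condition}, which is exactly the (factor-$k$ weakened) proximity condition; combined with Theorem~\ref{thm:lloyd-steps} this is what the statement asserts.

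For (i), first I would split $A-C$ by clusters: since $C_i=\mu_r$ (the \emph{empirical} mean) for $i\in T_r$, the submatrix $A_r-C_r$ is the centered sample matrix of cluster $r$, whose rows are i.i.d.\ mean-zero (after the rank-one empirical-mean correction) with variance at most $\sigma_r^2$ in every direction. Because $n\gg d/w_{\min}$ we have $n_r\approx w_r n\gg d$, so standard random-matrix concentration gives $\|A_r-C_r\|\le O(\sigma_r\sqrt{n_r})\,\mathrm{polylog}$ with very high probability. Summing squared norms over the $k$ clusters, $\|A-C\|^2\le\sum_r\|A_r-C_r\|^2\le O(\sigma_{\max}^2 n)\,\mathrm{polylog}$, hence $\Delta_r=\tfrac{\sqrt k}{\sqrt{n_r}}\|A-C\|=O(\sigma_{\max}\sqrt{k/w_r})\,\mathrm{polylog}(d/w_{\min})$. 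One also checks $\|A-C\|_F^2=\Theta(\sigma_{\max}^2 n d)\ge k\|A-C\|^2$ since $d\gg k$, so we are in the non-degenerate case and this is the right formula for $\Delta_r$. The empirical mean $\mu_r$ differs from the true mean by $O(\sigma_r\sqrt{d/n_r})=o(\sigma_r)\ll\Delta_r$, so the hypothesis $\|\mu_r-\mu_s\|\ge c\sigma_{\max}\sqrt{k/w_{\min}}\,\mathrm{polylog}$ carries over to the empirical means and yields $\|\mu_r-\mu_s\|\ge c(\Delta_r+\Delta_s)$, i.e.\ \eqref{eq:means_separation}.

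For (ii), fix $i\in T_s$ and $s\neq r$, let $u$ be the unit vector along the line from $\mu_s$ to $\mu_r$, and let $t=\langle A_i-\mu_s,u\rangle$ be the signed displacement of the projection $\bar A_i$ from $\mu_s$. For $0\le t\le\|\mu_r-\mu_s\|$ the $1$-goodness inequality $\|\bar A_i-\mu_r\|-\|\bar A_i-\mu_s\|\ge gap_{r,s}$ is exactly $\|\mu_r-\mu_s\|-2t\ge gap_{r,s}$ (the cases $t<0$ are trivial, and $t>\|\mu_r-\mu_s\|$ does not occur w.h.p.), so it suffices to bound $|t|$. Writing $A_i=\widetilde\mu_s+g_i$ with $\widetilde\mu_s$ the true mean and $g_i$ the Gaussian part, $t$ equals $\langle g_i,u\rangle$ up to the negligible empirical-versus-true correction, and $\langle g_i,u\rangle$ is a one-dimensional Gaussian of standard deviation at most $\sigma_s$. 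A union bound over the at most $nk^2$ triples $(i,r,s)$ shows that with high probability $|t|\le O(\sigma_s)\,\mathrm{polylog}$ for all of them; since $gap_{r,s}=(\tfrac1{\sqrt{n_r}}+\tfrac1{\sqrt{n_s}})\|A-C\|=O(\sigma_{\max}\sqrt{1/w_{\min}})\,\mathrm{polylog}$ while $\|\mu_r-\mu_s\|\ge c\sigma_{\max}\sqrt{k/w_{\min}}\,\mathrm{polylog}\gg gap_{r,s}+2|t|$, the inequality holds with room to spare (indeed for $\gamma$ as large as $\Theta(c\sqrt k)$, consistent with Corollary~\ref{cor:misclassifying_points}).

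The one genuinely delicate point — and the one I expect to be the main obstacle, handled exactly as in Kumar and Kannan — is the mild statistical dependence between the individual sample $A_i$ and the data-dependent objects used to phrase the condition (the matrix $C$, the empirical means $\mu_r,\mu_s$, and, for the $\hat A$-level statements, the top-$k$ singular subspace of $A$ entering Fact~\ref{fct:hatA_vs_C} and Lemma~\ref{lem:small_distance_between_centers_yields_few_misclassifications}). Resampling $A_i$ moves $\mu_s$ by only $O(\|g_i\|/n_s)=o(\sigma_s)$ and perturbs $\hat A$ negligibly, so one conditions on the high-probability event of part (i) and runs a leave-one-out / net argument to restore independence before applying the Gaussian tail bound. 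A secondary bookkeeping issue is that the union bound over all $n$ samples forces the $\mathrm{polylog}$ factor to absorb $\log n$, which is why the statement carries a $\mathrm{polylog}(d/w_{\min})$ slack (and implicitly treats $n$ as polynomial in $d/w_{\min}$); everything else is routine concentration.
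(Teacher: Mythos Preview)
The paper does not give its own proof of this theorem: it explicitly states ``we quote the suitable results without proof, as the proof is identical to the proof in \cite{KumarK10}.'' Your proposal follows exactly this route --- reduce to a spectral-norm bound $\|A-C\|=O(\sigma_{\max}\sqrt n)\cdot\mathrm{polylog}$ plus a one-dimensional Gaussian tail bound along the line $\mu_r\mu_s$, then union-bound over all $(i,r,s)$ --- which is the Kumar--Kannan argument, and your sketch of both parts is correct (including the observation that the resulting gap is in fact $\Theta(c\sqrt k)\cdot gap_{r,s}$, not merely $gap_{r,s}$, consistent with the $\sqrt k$-goodness shown in the paper's planted-partition proof).
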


For Gaussians, the best known separation bound is Achlioptas and McSherry's bound~\cite{AchlioptasM05} of $\Omega(\sigma_{\max}(w_{\min}^{-1/2} + \sqrt{k\log(k\cdot\min\{n,2^k\})}\ ))$. As we assume $k$ is large, this separation condition is $\tilde\Omega(\sigma_{\max}(w_{\min}^{-1/2} + \sqrt{k})) = \tilde\Omega(\sigma_{\max}/\sqrt{w_{min}})$. Therefore, the separation bound of Theorem~\ref{thm:separating_gaussians} is $\sqrt{k}$ times worse than the best known bound. However, applying Kumar and Kannan's boosting technique (Section 7 in \cite{KumarK10}), that replaces the polynomial dependency in $w_{\min}$ with a logarithmic one, we get:
\begin{theorem}
\label{thm:separating_gaussians_boosting}
Suppose we are given a set of $n \gg \frac {d} {w_{\min}}$ samples from a mixture of $k$ Gaussians, such that for every $r\neq s$ it holds that \[\|\mu_r - \mu_s \| \geq c  \sigma_{\max}\sqrt{k} ~\emph{poly}\log \left(\frac d {w_{\min}}\right)\] Then there exists an algorithm that w.h.p. correctly classifies all points.
\end{theorem}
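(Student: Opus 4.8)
The plan is to reduce Theorem~\ref{thm:separating_gaussians_boosting} to two pieces we already have in hand: the exact‑recovery guarantee of Parts~I--III of Algorithm~\textsf{$\sim$Cluster} (Theorem~\ref{thm:lloyd-steps}), which applies once every point is $1$-good and the target centers obey the $\sqrt{k}$-separation~\eqref{eq:means_separation}; and the boosting reduction of Kumar and Kannan (Section~7 of~\cite{KumarK10}), which trades the $w_{\min}^{-1/2}$ factor in the separation for an extra $\mathrm{poly}\log(d/w_{\min})$ factor. The point is that this boosting reduction is insensitive to the powers of $k$ in the per‑instance guarantee: feeding in our improved bounds (the $\sqrt{k}$-weaker separation and the $\gamma=1$ proximity regime of Corollary~\ref{cor:misclassifying_points}) in place of Kumar and Kannan's $k$-factors turns their boosted requirement $\Omega(\sigma_{\max}\sqrt{k/w_{\min}})$ into $\Omega(\sigma_{\max}\sqrt{k})$, exactly as Theorem~\ref{thm:separating_gaussians} arises from their un‑boosted Gaussian analysis.

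First I would collect the Gaussian concentration inputs. With $n \gg d/w_{\min}$ i.i.d.\ samples, a standard random‑matrix bound gives $\|A-C\| \le \sigma_{\max}(\sqrt n + \sqrt d)\,\mathrm{poly}\log \lesssim \sigma_{\max}\sqrt{n}\,\mathrm{poly}\log$ w.h.p., whence $\Delta_r \le \tfrac{\sqrt k}{\sqrt{n_r}}\|A-C\| \lesssim \sigma_{\max}\sqrt{k/w_r}\,\mathrm{poly}\log$ for every $r$; and, projecting any fixed $A_i\in T_s$ onto the line through $\mu_r$ and $\mu_s$ gives a one‑dimensional Gaussian about $\mu_s$ with deviation $O(\sigma_{\max}\sqrt{\log n})$, which by a union bound over all $n$ points and all $O(k^2)$ pairs of centers holds simultaneously. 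Consequently the hypothesis $\|\mu_r-\mu_s\|\ge c\,\sigma_{\max}\sqrt{k/w_{\min}}\,\mathrm{poly}\log$ already implies both~\eqref{eq:means_separation} and the $1$-good condition for \emph{every} sample point, so Theorem~\ref{thm:lloyd-steps} recovers the target clustering exactly --- this is the $w_{\min}$-dependent statement underlying Theorem~\ref{thm:separating_gaussians}.

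To eliminate the $w_{\min}^{-1/2}$ factor I would run Kumar and Kannan's boosting on top of Algorithm~\textsf{$\sim$Cluster}: process the clusters in decreasing weight order, and in each round use the algorithm on the currently surviving sub‑sample to correctly classify and then remove the clusters that are ``heavy'' relative to that sub‑sample, recursing on the rest. Two observations make this go through. First, after peeling a batch of heavy clusters the surviving sub‑sample still has $\gg d$ points --- this is exactly where the hypothesis $n \gg d/w_{\min}$ is used --- so all the concentration estimates above reapply with $n$ replaced by the current sub‑sample size $n'$, while $\|A'-C'\|\le\|A-C\|$ only shrinks, and the weight of each surviving cluster relative to the current sub‑sample only grows, so the recursion terminates after $O(\log(1/w_{\min}))$ rounds. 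Second, in a round the algorithm needs only the \emph{weak} separation $c\,\sigma_{\max}\sqrt{k}\,\mathrm{poly}\log$ to handle the clusters it peels, because ``heavy relative to the current sub‑sample'' is precisely the statement that their $\Delta$-values are $O(\sigma_{\max}\sqrt k\,\mathrm{poly}\log)$. Summing the $O(\log(d/w_{\min}))$ concentration losses over the $O(\log(1/w_{\min}))$ rounds produces the single $\mathrm{poly}\log(d/w_{\min})$ factor in the statement; everything else is the bookkeeping of~\cite{KumarK10} with Theorems~\ref{thm:point_wise_close}, \ref{thm:centers_of_step_3} and Lemma~\ref{lem:small_distance_between_centers_yields_few_misclassifications} substituted for their cruder analogues.

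The main obstacle is the peeling invariant, and this is where I expect the real work to lie. One must argue that the set of clusters the algorithm correctly classifies in a given round is a superset of the ``relatively heavy'' ones (so that progress is guaranteed), that deleting their points leaves a residual set still distributed as an i.i.d.\ sample from the residual mixture --- conditioned on a measurable event, so the concentration facts survive --- and, most delicately, that no surviving \emph{light} cluster can be absorbed by an output center coming from an already‑peeled heavy cluster. This last point is why the argument must track the Lloyd‑converged centers $\theta_r$, whose distance to $\mu_r$ is the $O\!\bigl(\tfrac{1}{c\sqrt{n_r}}\bigr)\|A-C\|$ bound of Theorem~\ref{thm:centers_of_step_3}, rather than the weaker centers $\nu_r$. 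Since each of these steps is carried out in Section~7 of~\cite{KumarK10} and only constants and powers of $k$ change when our improved lemmas are plugged in, I would present the proof as a verification of that reduction rather than reprove it from scratch.
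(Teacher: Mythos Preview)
Your proposal is correct and follows the same route as the paper: the paper's ``proof'' of Theorem~\ref{thm:separating_gaussians_boosting} is literally the one-line remark that one applies Kumar and Kannan's boosting technique (Section~7 of~\cite{KumarK10}) to the $\sqrt{k/w_{\min}}$-separated result of Theorem~\ref{thm:separating_gaussians}, replacing the polynomial $w_{\min}$-dependence by a logarithmic one. You have simply fleshed out what that boosting reduction entails and where the improved constants enter, which is more than the paper itself provides.
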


Therefore, if for any $r$ and $r'$, both $\sigma_r \approx \sigma_{r'}$ and $w_r \approx w_{r'}$, then both~\cite{AchlioptasM05} and Theorem~\ref{thm:separating_gaussians_boosting} give roughly the same bound. If for any $r$ and $r'$ we have that $\sigma_r \approx \sigma_{r'}$, yet $w_{\min} \ll \frac 1 k$, then Theorem~\ref{thm:separating_gaussians_boosting} provides a better bound. If for any $r$ and $r'$ we have that $w_r \approx w_{r'}$, yet the directional standard deviations of the distributions vary, then the bound of~\cite{AchlioptasM05}, in which the distance between any two cluster centers depends only the parameters of these two distributions, is the better bound. If both the standard deviations and the weights vary significantly between the different distributions, then better bound is determined on a case by case basis.

\paragraph{McSherry's Planted Partition Model.} In the Planted Partition Model~\cite{McSherry01, Alon94, Alon98} our instance is a random $n$-vertex graph generated by using an implicit partition of the $n$ points into $k$ clusters. There exists an unknown $k\times k$ matrix of probabilities $P$, and for every pair of vertices $u,v$ there exists an edge connecting $u$ and $v$ w.p. $P_{rs}$ (assuming $u$ belongs to cluster $r$ and $v$ to cluster $s$). The goal here is to recover the partition of the points (thus -- recover $P$). Viewing this graph as a $n\times n$ matrix, each row is taken from a special distribution $F_r$ over $\{0,1\}^n$ -- where each coordinate $j$ is an independent Bernoulli r.v. with mean $P_{r,C(j)}$, denoting $C(j)$ as the cluster $j$ belongs to. Thus, the mean of this distribution, $\mu_r$, is a vector with its $j$-coordinate set to $P_{r,C(j)}$. Denote $w_{\min} = \min_r \{ \frac {n_r} n\}$ and $\sigma_{\max} = \max_{r,s} \sqrt{P_{rs}}$. The result of~\cite{McSherry01} is that if for every $r\neq s$ 
\begin{eqnarray}
\|\mu_r - \mu_s\| = \Omega \left( \sigma_{\max} \sqrt{k} \left( \frac 1 {w_{\min}} + \log(n/\delta) \right) \right)  \label{eq:McSherry_condition}
\end{eqnarray}
then it is possible to retrieve the partition of the vertices w.p. at least $1-\delta$.

Kumar and Kannan were not able to match the distance bounds of McSherry, and required centers to be $\sqrt{k}$ factor greater then the bound of (\ref{eq:McSherry_condition}). Here we match the bound of McSherry exactly. Following the proof in Kumar-Kannan (with few changes), we prove:
\begin{theorem}
\label{thm:McSherry_bounds}
Assuming that $\sigma_{\max} \geq \frac {3\log(n)} n$ and that the planted partition model satisfies equation~\ref{eq:McSherry_condition} for every $r\neq s$, then w.p. at least $1-\delta$, every point satisfies the proximity condition.
\end{theorem}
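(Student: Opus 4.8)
The plan is to verify that a graph drawn from the Planted Partition Model satisfying~\eqref{eq:McSherry_condition} meets the Kumar--Kannan proximity condition (as defined in the excerpt) with probability at least $1-\delta$. Since the proximity condition involves the quantity $\|A-C\|$, the first and most important step is a spectral-norm bound: I would show that w.h.p. $\|A-C\| = O(\sigma_{\max}\sqrt{n})$, where $A$ is the $n\times n$ adjacency matrix and $C$ is the matrix whose $i$th row is $\mu_{r}$ for $i\in T_r$. This is exactly the classical random-matrix estimate used by McSherry: $A-C$ has independent (up to symmetry), mean-zero entries bounded in $[-1,1]$ with variance at most $\sigma_{\max}^2 = \max_{r,s} P_{rs}$, and standard results (e.g.\ Füredi--Komlós / Vu-type bounds, as invoked in~\cite{McSherry01}) give $\|A-C\| = O(\sigma_{\max}\sqrt n)$ provided $\sigma_{\max} \geq c'\log n / n$ for a suitable constant — which is precisely the hypothesis $\sigma_{\max}\geq 3\log(n)/n$. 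One has to be slightly careful that the diagonal / the symmetry of $A$ does not spoil the bound, but this is handled exactly as in~\cite{McSherry01}; I would simply cite their spectral lemma rather than reprove it.

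Given the spectral bound, the second step is bookkeeping to translate~\eqref{eq:McSherry_condition} into the proximity condition. Writing $n_r = w_r n \geq w_{\min} n$ and $\Delta_r = \tfrac{1}{\sqrt{n_r}}\min\{\sqrt k\|A-C\|,\|A-C\|_F\}$, the spectral estimate gives $\sqrt k \|A-C\|/\sqrt{n_r} = O(\sigma_{\max}\sqrt{kn/n_r}) = O(\sigma_{\max}\sqrt{k/w_r})$, so $\Delta_r + \Delta_s = O\big(\sigma_{\max}\sqrt{k}(w_r^{-1/2}+w_s^{-1/2})\big) = O\big(\sigma_{\max}\sqrt{k/w_{\min}}\big)$. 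Comparing with the right-hand side of~\eqref{eq:McSherry_condition}, which is $\Omega\big(\sigma_{\max}\sqrt k(w_{\min}^{-1}+\log(n/\delta))\big)$, we see the center separation $\|\mu_r-\mu_s\|$ dominates $c(\Delta_r+\Delta_s)$ by a factor that is at least $\Omega(w_{\min}^{-1/2}+\log(n/\delta))$ — comfortably enough room to absorb the constant $c$ and still have slack. So~\eqref{eq:means_separation} holds, and in fact we have a much stronger gap to spend.

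The third step is to upgrade center separation to the full proximity condition pointwise, for \emph{every} point, with the failure probability summing to at most $\delta$. Fix $i\in T_s$ and $r\neq s$; the projection of $A_i$ onto the line through $\mu_r,\mu_s$ being closer to $\mu_r$ than to $\mu_s$ by $\Omega(k\,gap_{r,s})$ is equivalent to $\langle A_i-\tfrac{\mu_r+\mu_s}{2},\,\tfrac{\mu_s-\mu_r}{\|\mu_s-\mu_r\|}\rangle \geq \Omega(k\,gap_{r,s})$. Since $A_i-\mu_s$ has independent bounded coordinates with variances at most $\sigma_{\max}^2$, the inner product of $A_i-\mu_s$ with the fixed unit vector $(\mu_s-\mu_r)/\|\mu_s-\mu_r\|$ is a sum of independent bounded mean-zero variables with total variance $O(\sigma_{\max}^2)$, so a Bernstein/Hoeffding bound shows it deviates by more than $t$ with probability $\exp(-\Omega(\min\{t^2/\sigma_{\max}^2,\,t\}))$. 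The ``signal'' term contributes $\tfrac12\|\mu_s-\mu_r\|$, which by~\eqref{eq:McSherry_condition} is $\Omega\big(\sigma_{\max}\sqrt k\log(n/\delta)\big)$ and also $\Omega(k\,gap_{r,s})$ after noting $gap_{r,s}=O(\sigma_{\max}\sqrt{1/w_{\min}})$ and $\sqrt k\log(n/\delta)\gg \sqrt{k/w_{\min}}$ is not automatic — here I would instead use the $w_{\min}^{-1}$ term of~\eqref{eq:McSherry_condition} to dominate $k\,gap_{r,s}=O(\sigma_{\max}k/\sqrt{w_{\min}})$ and the $\log(n/\delta)$ term to drive the tail bound. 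Choosing $t$ to be, say, half the signal, the deviation probability is at most $\exp(-\Omega(\log(n/\delta)\cdot\mathrm{poly}))\le \delta/(n k)$, and a union bound over all $i$ and all $r\ne s$ (at most $nk$ events) gives the claim with probability $\ge 1-\delta$.

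The main obstacle is the bookkeeping in the last step: one has to be careful that the \emph{same} gap in~\eqref{eq:McSherry_condition} — which has two additive terms, $w_{\min}^{-1}$ and $\log(n/\delta)$ — simultaneously dominates (i) $c(\Delta_r+\Delta_s)$, needed for center separation; (ii) the $\Omega(k\,gap_{r,s})$ additive margin of the proximity condition; and (iii) the Bernstein deviation threshold needed to make the per-point failure probability $\ll \delta/(nk)$. The $w_{\min}^{-1}$ term handles (i) and (ii) since $\Delta_r+\Delta_s$ and $k\,gap_{r,s}$ are both $O(\sigma_{\max}\sqrt k\,w_{\min}^{-1/2})\le O(\sigma_{\max}\sqrt k\, w_{\min}^{-1})$; the $\log(n/\delta)$ term handles (iii). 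This is routine but must be laid out carefully, and — as the statement notes — it follows the Kumar--Kannan proof with only ``few changes,'' so I would present it as such, emphasizing only where the $\sqrt k$ improvement over their bound comes from (namely, that our $\Delta_r$ already carries the $\sqrt k$ factor, so we do not lose an extra $\sqrt k$ when passing from the spectral norm to the separation condition).
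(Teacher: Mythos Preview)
Your proposal is essentially correct but takes a different route from the paper in the key step. Both you and the paper start the same way, citing McSherry's spectral bound $\|A-C\|\le 4\sigma_{\max}\sqrt n$. After that, the paper does \emph{not} verify center separation separately (your step~2 is superfluous: the theorem only asks for the proximity condition, and that already implies~\eqref{eq:means_separation}). For the pointwise step, the paper exploits the block structure of $\mu_r-\mu_s$: since $(\mu_r-\mu_s)_j = P_{rt}-P_{st}$ is constant over $j\in T_t$, one writes
\[
|(A_i-\mu_r)\cdot u| \le \frac{1}{\|\mu_r-\mu_s\|}\sum_{t=1}^{k} |P_{rt}-P_{st}|\,\Bigl|\sum_{j\in T_t}(A_{ij}-P_{rt})\Bigr|,
\]
invokes the Kumar--Kannan per-block Chernoff bound $\bigl|\sum_{j\in T_t}(A_{ij}-P_{rt})\bigr|\le B\sqrt{n_t}\,\sigma_{\max}(w_{\min}^{-1}+\log(n/\delta))$, and then applies the power-mean (Cauchy--Schwarz) inequality $\sum_t \sqrt{n_t}|P_{rt}-P_{st}|\le \sqrt{k}\,\|\mu_r-\mu_s\|$. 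This last step is exactly where the $\sqrt{k}$ improvement over Kumar--Kannan appears, and the paper's goal is to show all points are $\sqrt{k}$-good (not $k$-good as you write).

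Your alternative---a single Bernstein bound on $\langle A_i-\mu_s,u\rangle$ with variance proxy $\sigma_{\max}^2$---is more direct and avoids the block decomposition altogether. It works, but be careful with the sub-exponential term in Bernstein: the individual summands are bounded by $\max_j|u_j|\le 1/\sqrt{n_{\min}}$, not by~$1$, so the tail is $\exp\bigl(-\Omega(\min\{t^2/\sigma_{\max}^2,\ t\sqrt{w_{\min}n}\})\bigr)$; you should check that the second branch is also dominated by the signal (the $\sigma_{\max}\sqrt{k}/w_{\min}$ term in~\eqref{eq:McSherry_condition} handles it, using $\sigma_{\max}^2\gtrsim \log(n)/n$). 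The paper's route sidesteps this nuisance by reusing the Kumar--Kannan block bound verbatim and isolating the improvement in one clean Cauchy--Schwarz line.
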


\begin{proof}
We follow the proof of Kumar-Kannan, making the suitable changes. McSherry (Theorem 10 of~\cite{McSherry01}) showed that w.h.p. $\|A-C\|\leq 4\sigma_{\max}\sqrt{n}$. So our goal is to show that, w.h.p., all points are $\sqrt{k}$-good. I.e., denoting $u$ as a unit-length vector connecting $\mu_r$ and $\mu_s$, we show that w.h.p. that for every $i\in T_r$ we have
\[ | (A_i - \mu_r) \cdot u | = O(\sqrt{k} \sigma_{\max} \left( \frac 1 {w_{\min}} + \log(n/\delta) \right))\]

Observe $u = \frac { \mu_s - \mu_r } { \| \mu_s - \mu_r\| }$, and due to the special structure of the means in this model, we have that $(\mu_r - \mu_s)_j = P_{rt} - P_{st}$ where $j \in T_t$. It follows that \[ \| \mu_r - \mu_s \|^2 = \sum_{t=1}^k n_t (P_{rt} - P_{st})^2\]
We therefore have
\[ | (A_i - \mu_r) \cdot u | \leq \frac 1 { \| \mu_r - \mu_s\| } \left( \sum_{t=1}^k  | P_{rt} - P_{st} | \left|\sum_{j\in T_t} A_{ij} - P_{rt}\right| \right)\]
Observe, $A_{ij}$ are i.i.d $0$-$1$ random variables with mean $P_{rt}$, so we expect their sum to deviate from its expectation by no more than a few standard deviations. Indeed, Kumar and Kannan prove that w.h.p. it holds that for every $t$ we have \[\left|\sum_{j\in T_t} A_{ij} - P_{rt}\right| \leq  B \sqrt{n_t} \sigma_{\max} \left( \frac 1 {w_{\min}} + \log(n/\delta)\right)\]
where $B$ is some sufficiently large constant. This allows us to deduce that 
\begin{align*} | (A_i - \mu_r) \cdot u | & \leq B \sigma_{\max} \left( \frac 1 {w_{\min}} + \log(n/\delta) \right) \frac { \sum_{t=1}^k \sqrt{ n_t } | P_{rt} - P_{st} | } { \sqrt{\sum_{t=1}^k n_t (P_{rt} - P_{st})^2} } \\ &  \leq B \sqrt{k} \sigma_{\max} \left( \frac 1 {w_{\min}} + \log(n/\delta) \right) \end{align*}
where the last inequality is simply the power-mean inequality.
\end{proof}

\section{An Open Problem}
\label{scn:open_problem}


Our work presents an algorithm which successfully clusters a dataset, provided that the distance between any two cluster centers meets a certain lower bound.  We would like to point out one particular direction to improve this bound. Note that our center separation bound depends on $\|A-C\|$, a property of the entire dataset. It would be nice to handle the case where the separation condition between  $\mu_r$ and $\mu_s$ depends solely on $T_r$ and $T_s$. That is, if we define $\tilde \Delta_r = \frac {\sqrt{k}} {\sqrt{n_r}} \|A_r - C_r\|$, is it possible to successfully separate clusters s.t $\|\mu_r - \mu_s\| \geq c(\tilde\Delta_r + \tilde\Delta_s)$? We comment that most of our analysis (and particularly Lemma~\ref{lem:small_distance_between_centers_yields_few_misclassifications}) builds only on the ratio between $\|\mu_r - \nu_r\|$ and $\|\mu_r - \mu_s\|$ -- we assume the first is no greater than $\alpha \Delta_r$ and that the latter is no less than $c(\Delta_r + \Delta_s)$. In fact, one can revise the proofs of Theorems \ref{thm:point_wise_close} and \ref{thm:small_cost} so that they will hold based on this assumption alone (without using the properties of the SVD). The problem therefore boils down to finding \emph{initial} centers $\{\nu_r\}$ that are sufficiently close to the true centers $\{\mu_r\}$, under the assumption that $\forall r\neq s,\ \|\mu_r -\mu_s\| \geq c(\tilde\Delta_r + \tilde\Delta_s)$. But this is an intricate task, mainly because such separation condition does \emph{not} imply that $\{\mu_1, \mu_2, \ldots, \mu_k\}$ are the centers minimizing the $k$-means cost! (Nor do $\{\hat{\mu_1}, \hat{\mu_2}, \ldots, \hat{\mu_k},\}$ minimize the $k$-means cost of $\hat{A}$.) Consider the case, for example, where cluster $r$ has very few points (say $n_r = \sqrt{n}$) and very small variance, and cluster $s$ is very big (say $n_s = n/5$), and is essentially composed of two sub-components with distance $\frac {1} {2\sqrt{n_s}} \|A_s - C_s\|$ between the centers of the two sub-components. The $k$-means cost of placing two centers within $C_s$ is smaller than placing one center at $\mu_s$ and one center at $\mu_r$. This relates to the question of designing a $t$-approximation algorithm for $k$-means, guaranteeing that \emph{each cluster's cost} cannot increase by more than a factor of $t$.

\bibliographystyle{alpha}
\bibliography{paper3}

\appendix

\section{Some Basic Lemmas}
\label{appx_sec:existing_thms}

\begin{fact} [Lemma 9 from~\cite{McSherry01}]\ \ 
\label{appx_fct:hatA_vs_C}
$\|\hat{A} - C\|_F^2 \leq 8 \min\{k \|A-C\|^2, \|A-C\|_F^2\} \ \ \biggl( = 8 n_r \Delta_r^2\  \textrm{ for every } r\biggr)$.
\end{fact}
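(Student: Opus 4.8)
The plan is to exploit two facts recorded in the preliminaries: that $\hat A$ is simultaneously the best rank-$k$ approximation of $A$ with respect to \emph{both} the Frobenius norm and the spectral norm, and that $C$ has rank at most $k$ (its rows take only the $k$ values $\mu_1,\dots,\mu_k$). From these, a short triangle-inequality argument yields both halves of the minimum, together with the $\text{rank}\cdot\|\cdot\|^2$ bound on the Frobenius norm that is also stated in the preliminaries.

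First I would establish the spectral half. Since $C$ has rank at most $k$ and $\hat A$ minimizes $\|A-N\|$ over all rank-$k$ matrices $N$, we get $\|A-\hat A\|\le\|A-C\|$; hence by the triangle inequality $\|\hat A-C\|\le\|\hat A-A\|+\|A-C\|\le 2\|A-C\|$. Now $\hat A-C$ is a difference of two matrices of rank at most $k$, so $\text{rank}(\hat A-C)\le 2k$, and the inequality $\|M\|_F^2\le\text{rank}(M)\,\|M\|^2$ gives $\|\hat A-C\|_F^2\le 2k\,\|\hat A-C\|^2\le 8k\,\|A-C\|^2$.

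Next the Frobenius half. Using instead that $\hat A$ minimizes $\|A-N\|_F$ over rank-$k$ matrices $N$, the same triangle inequality gives $\|\hat A-C\|_F\le\|\hat A-A\|_F+\|A-C\|_F\le 2\|A-C\|_F$, so $\|\hat A-C\|_F^2\le 4\|A-C\|_F^2\le 8\|A-C\|_F^2$. Combining the two bounds yields $\|\hat A-C\|_F^2\le 8\min\{k\|A-C\|^2,\|A-C\|_F^2\}$; the parenthetical equality is then just the definition $\Delta_r=\tfrac1{\sqrt{n_r}}\min\{\sqrt k\|A-C\|,\|A-C\|_F\}$, which gives $n_r\Delta_r^2=\min\{k\|A-C\|^2,\|A-C\|_F^2\}$.

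There is no real obstacle here; the only points needing care are the two ``best approximation'' properties of $\hat A$ (both cited to~\cite{KannanV09,Golub96} in the preliminaries) and the observation that $\text{rank}(C)\le k$, which is immediate from the structure of the target clustering. One could shave the constant -- the Frobenius side actually delivers $4$ rather than $8$ -- but $8$ suffices and keeps both cases uniform.
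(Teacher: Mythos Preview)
Your proof is correct and essentially identical to the paper's own argument: both use $\mathrm{rank}(\hat A - C)\le 2k$ together with the triangle inequality and the best-rank-$k$ property of $\hat A$ in the spectral norm to get $8k\|A-C\|^2$, and the analogous Frobenius triangle inequality for the other half. Your observation that the Frobenius side actually yields the constant $4$ is also exactly what the paper obtains.
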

\begin{proof}
\[\|\hat{A} - C\|_F^2  \leq  2k\|\hat{A}-C\|^2 \leq  2k \left( \|\hat{A} - A\| + \| A-C\|\right)^2 \leq  2k \left( 2 \|A-C\|\right)^2\]where the first inequality holds because rank$(\hat{A} - C)\leq 2k$, and the last inequality follows from the fact that $\hat{A} = \arg\min_{N: \textrm{rank}(N) = k} \{\|A-N\|\}$. For the same reason, $\|\hat{A} - C\|_F \leq \|A - \hat{A} \|_F + \|A-C\|_F \leq 2\|A-C\|_F$.
\end{proof}

\begin{fact} [Claim 1 in Section 3.2 of~\cite{KannanV09}]
\label{appx_fct:matching_true_and_SVD_centers}
For every $\mu_r$ there exists a center $\nu_s$ s.t.
$\|\mu_r - \nu_s \| \leq 6 \Delta_r$, so we can match each $\mu_r$ to a unique $\nu_r$.
\end{fact}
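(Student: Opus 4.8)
The plan is to combine Fact~\ref{appx_fct:hatA_vs_C} with the guarantee of the $10$-approximate $k$-means step and the large center separation~\eqref{eq:means_separation}. Write $D^2 = \min\{k\|A-C\|^2,\|A-C\|_F^2\}$, so that $n_r\Delta_r^2 = D^2$ for \emph{every} $r$ and, by Fact~\ref{appx_fct:hatA_vs_C}, $\|\hat A - C\|_F^2 \le 8D^2$. Since assigning each row $\hat A_i$ to the target center $\mu_{c(i)}$ is a feasible $k$-means solution on $\hat A$ of cost $\|\hat A - C\|_F^2 \le 8D^2$, the solution $\{\nu_s\}$ produced by the algorithm has total cost $\sum_i \min_s \|\hat A_i - \nu_s\|^2 \le 80D^2$.

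The first step is to isolate, for each target cluster $T_r$, a ``core'' $G_r = \{i\in T_r: \|\hat A_i - \mu_r\| \le \rho\Delta_r\}$ for a suitable constant $\rho$. Applying Markov's inequality to $\sum_{i\in T_r}\|\hat A_i - \mu_r\|^2 \le \|\hat A-C\|_F^2 \le 8D^2 = 8n_r\Delta_r^2$ gives $|T_r\setminus G_r| \le 8n_r/\rho^2$, so $|G_r| \ge \tfrac12 n_r$ once $\rho$ is a large enough constant. Next I would observe that the balls $B(\mu_r,\rho\Delta_r)$ are pairwise well-separated: since $\|\mu_r-\mu_s\| \ge c(\Delta_r+\Delta_s)$ and $c$ is large compared to $\rho$, in particular a center lying within $6\Delta_s$ of $\mu_s$ is at distance at least $(c-6)\Delta_r \gg \rho\Delta_r$ from any $\mu_r$ with $r\neq s$.

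The heart of the argument is a counting contradiction: fix $r_0$ and suppose \emph{no} center $\nu_s$ lies within $6\Delta_{r_0}$ of $\mu_{r_0}$. Then every point of $G_{r_0}$ is assigned by $\{\nu_s\}$ to a center at distance $>6\Delta_{r_0}$ from $\mu_{r_0}$, so by the triangle inequality its assignment cost is more than $(6-\rho)^2\Delta_{r_0}^2$; summing over $G_{r_0}$ yields total cost more than $|G_{r_0}|(6-\rho)^2\Delta_{r_0}^2 \ge \tfrac12(6-\rho)^2 D^2$, and choosing the constants of $\rho$ and of the core threshold appropriately (tightening the crude Markov step as in~\cite{KannanV09}) this contradicts the bound $80D^2$. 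Hence each $\mu_r$ admits a center $\nu_{\phi(r)}$ with $\|\mu_r-\nu_{\phi(r)}\|\le 6\Delta_r$. Finally, the map $\phi$ is injective: if $\phi(r)=\phi(r')$ with $r\neq r'$ then $\|\mu_r-\mu_{r'}\| \le 6\Delta_r + 6\Delta_{r'} < c(\Delta_r+\Delta_{r'})$, contradicting~\eqref{eq:means_separation} for $c>12$; an injection from a $k$-set to a $k$-set is a bijection, so we may relabel the centers as $\nu_1,\dots,\nu_k$ with $\|\mu_r-\nu_r\|\le 6\Delta_r$ for every $r$.

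I expect the main obstacle to be the counting step — in particular, ensuring that a ``far'' center really is far from \emph{all} core points of the cluster (which relies on the core radius $\rho\Delta_r$ being small relative to the hypothesized $6\Delta_r$ gap, hence on $c$ being large), and squeezing out the exact numerical constant: the argument above transparently gives $\|\mu_r-\nu_s\| = O(\Delta_r)$, and getting the precise constant $6$ uses the sharper bookkeeping of the corresponding claim in~\cite{KannanV09}.
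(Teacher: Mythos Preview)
Your overall strategy---contradict the $10$-approximation cost bound, then argue injectivity of the matching via center separation---matches the paper's, and your injectivity argument is exactly right (the paper merely asserts uniqueness). However, as you yourself flag, your bookkeeping only yields $\|\mu_r-\nu_s\|=O(\Delta_r)$, and the two sharpenings that actually produce the constant $6$ are not just cosmetic.

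First, the paper does \emph{not} upper-bound the optimal $k$-means cost of $\hat A$ by $\|\hat A-C\|_F^2\le 8D^2$. Instead it uses the projected centers $\hat C$: since $\hat A-\hat C$ is the projection of $A-C$ onto a $k$-dimensional subspace, one has $\|\hat A-\hat C\|_F^2\le \min\{k\|A-C\|^2,\|A-C\|_F^2\}=D^2$ (no factor $8$), and $\hat C$ has at most $k$ distinct rows, so it is a feasible $k$-clustering of $\hat A$. Hence the $10$-approximation cost is at most $10\,n_r\Delta_r^2$, not $80\,n_r\Delta_r^2$.

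Second, the paper drops the core-set/Markov detour entirely. Assuming every $\nu_s$ is farther than $6\Delta_r$ from $\mu_r$, it lower-bounds the cost of \emph{all} of $T_r$ at once via the elementary inequality $\|u+v\|^2\ge\tfrac12\|u\|^2-\|v\|^2$:
\[
\sum_{i\in T_r}\|\hat A_i-\nu_{c(i)}\|^2\;\ge\;\tfrac12\sum_{i\in T_r}\|\mu_r-\nu_{c(i)}\|^2-\sum_{i\in T_r}\|\hat A_i-\mu_r\|^2\;>\;\tfrac{n_r}{2}(6\Delta_r)^2-8n_r\Delta_r^2\;=\;10\,n_r\Delta_r^2,
\]
which is the desired contradiction. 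Your core-set route cannot recover this: even after optimizing over $\rho$, the quantity $(1-8/\rho^2)(6-\rho)^2$ stays below $3$, so it can never exceed $10$ (let alone $80$). In other words, the inequality $(a-b)^2\ge\tfrac12 a^2-b^2$ is not ``sharper bookkeeping'' but the mechanism that makes the constant $6$ attainable; without it you are proving a strictly weaker statement. Incidentally, note that the existence half of the argument does not use the center-separation assumption~\eqref{eq:means_separation} at all---that assumption is only needed for the injectivity step you wrote out.
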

\begin{proof}
Observe that by taking $\hat{A}-\hat{C}$, we project $A-C$ to a $k$-dimensional subspace, so we have that $\|\hat{A}-\hat{C}\|_F^2 \leq k\|\hat{A}-\hat{C}\|^2 \leq k\|A-C\|^2$. Similarly, $\|\hat{A}-\hat{C}\|_F^2 \leq \|A-C\|_F^2$. 

Assume for the sake of contradiction that $\exists r$ s.t. $\|\mu_r - \nu_s \| > 6 \Delta_r$ for all $s$. Since $\|\hat{A}-\hat{C}\|_F^2 \leq n_r \Delta_r^2$, then our $10$-approximation algorithm yields a clustering of cost $\leq 10 n_r \Delta_r^2$. In contrast, as each $\hat{A}_i$ is assigned to some $\nu_{c(i)}$, the contribution of only the points in $T_r$ to the $k$-means cost of the clustering is more than \[\sum_{i \in T_r} \left\|(\mu_r-\nu_{c(i)}) - (\hat{A}_i - \mu_r)\right\|^2  > \frac {n_r} 2 \left( 6\Delta_r\right)^2 - \sum_{i \in T_r}\|\hat{A}_i-\mu_r\|^2 \geq 18 n_r \Delta_r^2 - \|\hat{A} - C\|_F^2 \geq 10 n_r \Delta_r^2\] where the first inequality follows from the fact that $(a-b)^2 \geq \frac 1 2 a^2 - b^2$. 
\end{proof}

Now, in order to prove Fact~\ref{fct:dist_centers_depends_on_pts} (also cited below as Fact~\ref{appx_fct:dist_centers_depends_on_pts}), we need the following Fact.
\begin{fact}[Lemma 5.2 and Corollary 5.3 from \cite{KumarK10}]
\label{appx_fct:dist_of_means_of_subsets}
Fix any cluster $T_r$ and a subset $X\subset T_r$. Then \[|X|\ \|\mu(X) - \mu_r\| = (|T_r| - |X|) \ \|\mu(T_r \setminus X) - \mu_r\| \leq \sqrt{|X|} \ \|A_r-C_r\| \]
\end{fact}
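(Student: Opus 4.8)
The plan is to first establish the equality by a center-of-mass identity, and then obtain the inequality by applying the matrix $(A_r-C_r)^T$ to the indicator vector of $X$.

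For the equality, write $Y = T_r \setminus X$ and recall $n_r \mu_r = \sum_{i\in T_r} A_i = |X|\,\mu(X) + |Y|\,\mu(Y)$. Rearranging, $|X|\bigl(\mu(X)-\mu_r\bigr) = n_r\mu_r - |Y|\mu(Y) - |X|\mu_r = |Y|\mu_r - |Y|\mu(Y) = -|Y|\bigl(\mu(Y)-\mu_r\bigr)$. Taking norms gives $|X|\,\|\mu(X)-\mu_r\| = |Y|\,\|\mu(Y)-\mu_r\| = (|T_r|-|X|)\,\|\mu(T_r\setminus X)-\mu_r\|$, which is the first assertion.

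For the inequality, let $z\in\mathbb{R}^{n_r}$ (indexed by the rows in $T_r$) be the indicator vector of $X$, so $\|z\| = \sqrt{|X|}$. Since the rows of $A_r - C_r$ are exactly the vectors $A_i - \mu_r$ for $i \in T_r$, we have $(A_r-C_r)^T z = \sum_{i\in X}(A_i - \mu_r) = |X|\bigl(\mu(X)-\mu_r\bigr)$. Hence
\[ |X|\,\|\mu(X)-\mu_r\| = \bigl\|(A_r-C_r)^T z\bigr\| \leq \|(A_r-C_r)^T\|\,\|z\| = \|A_r-C_r\|\,\sqrt{|X|}, \]
using that the spectral norm of a matrix equals that of its transpose. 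This is the claimed bound.

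There is no real obstacle here; the only points to be careful about are the bookkeeping in the center-of-mass rearrangement and the (standard) fact that $\|M^T\| = \|M\|$, so that bounding $\|(A_r-C_r)^T z\|$ by $\|A_r-C_r\|\cdot\|z\|$ is legitimate.
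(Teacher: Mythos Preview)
Your proof is correct and follows essentially the same approach as the paper: the equality via the center-of-mass identity $\mu_r = \frac{|X|}{|T_r|}\mu(X) + \frac{|T_r\setminus X|}{|T_r|}\mu(T_r\setminus X)$, and the inequality by applying $(A_r-C_r)^T$ to the indicator vector of $X$ and bounding with the spectral norm. The only difference is presentation; your derivation spells out the rearrangement a bit more explicitly.
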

\begin{proof}
Let $u_X$ be the indicator vector of $X$. Then \[\|\ |X|\ (\mu(X)-\mu_r)\ \| = \|(A_r-C_r)^T\ u_X\| \leq \|(A_r-C_r)^T\|\ \|u_X\| = \|A_r-C_r\| \sqrt{|X|} \] and the fact that $|X|\ \|\mu(X) - \mu_r\| = |T_r \setminus X|\ \|\mu(T_r \setminus X) - \mu_r\|$ is simply because $\mu_r = \frac {|X|}{|T_r|} \mu(X) + \frac {|T_r\setminus X|}{|T_r|} \mu(T_r\setminus X)$. 
\end{proof}

\begin{fact}
\label{appx_fct:dist_centers_depends_on_pts}
Fix a target cluster $T_r$ and let $S_r$ be a set of points created by removing $\rho_{out}n_r$ points from $T_r$ and adding $\rho_{in}(s)n_r$ points from each cluster $s\neq r$, s.t. every added point $x$ satisfies $\|x-\mu_s\| \geq \tfrac 2 3 \|x-\mu_r\|$. Assume $\rho_{out} < \tfrac 1 4$ and $\rho_{in} \stackrel{\rm def} = \sum_{s\neq r} \rho_{in}(s) < \tfrac 1 4$. Then \[ \|\mu(S_r)-\mu_r\| \leq \frac 1 {\sqrt{n_r}} \left( \sqrt{\rho_{out}} + \tfrac 3 2 \sum_{s\neq r} \sqrt{\rho_{in}(s)} \right) \|A-C\| \leq \left(\sqrt{\frac {\rho_{out}} {n_r}}+ \tfrac 3 2\sqrt{k} \sqrt{\frac {\rho_{in}} {n_r}}\right) \|A-C\|\]
\end{fact}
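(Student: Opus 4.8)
The plan is to decompose $S_r$ relative to $T_r$ into the part that stays and the parts that are swapped in, then bound the displacement of the mean contributed by each piece using Fact~\ref{appx_fct:dist_of_means_of_subsets} and the triangle inequality. Write $n_r' = |S_r| = n_r(1 - \rho_{out} + \rho_{in})$, let $Y \subset T_r$ be the $\rho_{out} n_r$ removed points so that $T_r \setminus Y$ stays, and for each $s \neq r$ let $W_s \subset T_s$ be the $\rho_{in}(s) n_r$ added points. Then $\mu(S_r)$ is the weighted average $\frac{1}{n_r'}\bigl( (|T_r|-|Y|)\mu(T_r\setminus Y) + \sum_{s\neq r} |W_s|\, \mu(W_s)\bigr)$. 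Subtracting $\mu_r$ and using that the weights sum to $n_r'$, I would write $\mu(S_r) - \mu_r = \frac{1}{n_r'}\bigl( (n_r - |Y|)(\mu(T_r\setminus Y) - \mu_r) + \sum_{s\neq r} |W_s|(\mu(W_s) - \mu_r)\bigr)$, so by the triangle inequality $\|\mu(S_r) - \mu_r\| \le \frac{1}{n_r'}\bigl( (n_r-|Y|)\|\mu(T_r\setminus Y) - \mu_r\| + \sum_{s\neq r}|W_s|\,\|\mu(W_s) - \mu_r\|\bigr)$.

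Next I would bound each summand. For the removed set: by Fact~\ref{appx_fct:dist_of_means_of_subsets} applied to $Y\subset T_r$, $(n_r-|Y|)\|\mu(T_r\setminus Y)-\mu_r\| = |Y|\,\|\mu(Y)-\mu_r\| \le \sqrt{|Y|}\,\|A_r-C_r\| = \sqrt{\rho_{out} n_r}\,\|A_r - C_r\| \le \sqrt{\rho_{out} n_r}\,\|A-C\|$. For an added set $W_s\subset T_s$: I need to bound $\|\mu(W_s) - \mu_r\|$. The hypothesis says every added point $x\in W_s$ satisfies $\|x-\mu_s\|\ge \tfrac23\|x-\mu_r\|$, equivalently $\|x-\mu_r\| \le \tfrac32\|x-\mu_s\|$; averaging and using convexity (Jensen / triangle inequality on the average) gives $\|\mu(W_s)-\mu_r\| \le \frac{1}{|W_s|}\sum_{x\in W_s}\|x-\mu_r\| \le \frac{3}{2}\cdot\frac{1}{|W_s|}\sum_{x\in W_s}\|x-\mu_s\|$. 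Then Cauchy–Schwarz gives $\sum_{x\in W_s}\|x-\mu_s\| \le \sqrt{|W_s|}\bigl(\sum_{x\in W_s}\|x-\mu_s\|^2\bigr)^{1/2} \le \sqrt{|W_s|}\,\|A_s - C_s\|_F$, and $\|A_s-C_s\|_F \le \|A-C\|_F$. Hmm — but the target bound is in terms of $\|A-C\|$ (spectral), not $\|A-C\|_F$. So instead I should apply Fact~\ref{appx_fct:dist_of_means_of_subsets} directly to $W_s \subset T_s$: that gives $|W_s|\,\|\mu(W_s) - \mu_s\| \le \sqrt{|W_s|}\,\|A_s - C_s\| \le \sqrt{|W_s|}\,\|A-C\|$, i.e. $\|\mu(W_s)-\mu_s\| \le \|A-C\|/\sqrt{|W_s|}$. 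Combined with $\|\mu(W_s)-\mu_r\| \le \frac32 \cdot \frac{1}{|W_s|}\sum_x\|x - \mu_s\|$ — this still has the Frobenius issue. Let me instead use the following: the condition $\|x-\mu_r\| \le \tfrac32\|x-\mu_s\|$ for all $x\in W_s$ implies, again by the "projection onto the line $\mu_r\mu_s$" trick as in the proof of Lemma~\ref{lem:small_distance_between_centers_yields_few_misclassifications}, that $\|\mu(W_s) - \mu_r\| \le \tfrac32\|\mu(W_s) - \mu_s\| + (\text{a term controlled by the spread along that line})$; the cleanest route, which is what Kumar–Kannan do, is $|W_s|\,\|\mu(W_s)-\mu_r\| = \|(A_s - C_s')^T u_{W_s}\|$ where $C_s'$ has rows $\mu_r$, and one splits $A_s - C_s' = (A_s - C_s) + (C_s - C_s')$; the first term contributes $\le \sqrt{|W_s|}\|A-C\|$ and the second contributes $|W_s|\|\mu_s - \mu_r\|$, which is not small. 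So the \emph{right} bookkeeping is to keep $\|\mu(W_s)-\mu_r\|$ paired against $\|\mu(W_s)-\mu_s\|$: from the pointwise ratio one gets $|W_s|\,\|\mu(W_s) - \mu_r\| \le \tfrac{3}{2}\sum_{x\in W_s}\|x - \mu_r\|$...

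Let me state the intended clean argument: for each $x \in W_s$, $\|x - \mu_r\| \le \tfrac32\|x-\mu_s\|$, hence $\|\mu(W_s)-\mu_r\| \le \tfrac1{|W_s|}\sum_{x}\|x-\mu_r\| \le \tfrac{3}{2|W_s|}\sum_x \|x-\mu_s\|$. Now $\sum_x\|x-\mu_s\| = \sum_x \|(A_s - C_s)^T \text{(unit vector toward }x)\|$ — no. Directly: the vector $\sum_{x\in W_s}(x - \mu_s)$ has norm $= |W_s|\,\|\mu(W_s) - \mu_s\| = \|(A_s - C_s)^T u_{W_s}\| \le \sqrt{|W_s|}\|A_s-C_s\|$. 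That bounds the norm of the \emph{sum}, not the sum of norms. To get $\sum_x\|x-\mu_s\|$ we'd need Frobenius. Therefore the statement as written must actually be proved via the sum-of-sums telescoping: write $S_r$ as $T_r$ with $Y$ removed and $\bigcup_s W_s$ added, and move points \emph{one cluster at a time}. The key identity I will use is the one already in Fact~\ref{appx_fct:dist_of_means_of_subsets}, generalized: if $S = (T_r \setminus Y) \cup \bigcup_s W_s$, then $|S|(\mu(S) - \mu_r) = -\,|Y|(\mu(Y) - \mu_r) + \sum_s |W_s|(\mu(W_s) - \mu_r)$, and I bound $\|\mu(W_s)-\mu_r\|$ using BOTH $\|\mu(W_s)-\mu_s\| \le \|A-C\|/\sqrt{|W_s|}$ (Fact~\ref{appx_fct:dist_of_means_of_subsets}, since $W_s\subset T_s$) AND the fact that the $\mu_s$-to-$\mu_r$ displacement, once averaged, obeys $\|\mu(W_s)-\mu_r\| \le \tfrac32\|\mu(W_s)-\mu_s\|$ — this last inequality is exactly the averaged form of the pointwise hypothesis via the line-projection argument, because "$\|x - \mu_r\|\le\tfrac32\|x-\mu_s\|$ for all $x$" passes to the centroid by the same projection-onto-a-line observation used repeatedly in this paper (projecting onto the line through $\mu_r,\mu_s$ only decreases $\|\cdot - \mu_r\|$ and $\|\cdot-\mu_s\|$ consistently, and on that line the condition is just a halfspace membership preserved by averaging). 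Hence $|W_s|\,\|\mu(W_s)-\mu_r\| \le \tfrac32 |W_s|\,\|\mu(W_s)-\mu_s\| \le \tfrac32\sqrt{|W_s|}\,\|A-C\| = \tfrac32\sqrt{\rho_{in}(s) n_r}\,\|A-C\|$.

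Putting it together: $\|\mu(S_r) - \mu_r\| \le \frac{1}{|S_r|}\bigl(\sqrt{\rho_{out} n_r} + \tfrac32\sum_{s\neq r}\sqrt{\rho_{in}(s) n_r}\bigr)\|A-C\|$, and since $\rho_{out}, \rho_{in} < \tfrac14$ we have $|S_r| = n_r(1-\rho_{out}+\rho_{in}) \ge \tfrac12 n_r$... but the claimed bound has $\tfrac{1}{\sqrt{n_r}}$ with constant $1$, not $2$, on the front — so in fact I should use that $|S_r| \ge n_r(1 - \rho_{out}) \ge \tfrac34 n_r \ge n_r/\sqrt{?}$; rechecking the constants, the clean way is to observe $|S_r| \ge n_r - \rho_{out} n_r \ge n_r(1-\tfrac14)$, and absorb; but since the paper writes the bound with no loss it is likely using $|S_r| \ge n_r$ is false — rather it divides by $|S_r|$ and the $\sqrt{n_r}$ in the numerator combines to give $\frac{\sqrt{n_r}}{|S_r|} \le \frac{\sqrt{n_r}}{n_r(1-\rho_{out})} \le \frac{2}{\sqrt{n_r}}$, absorbing the factor into the stated constants (the "3/2" and the displayed inequality should be read up to such constants, or one rescales $\rho_{out},\rho_{in}$ bounds). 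Finally, the second, coarser inequality follows from $\sum_{s\neq r}\sqrt{\rho_{in}(s)} \le \sqrt{k}\sqrt{\sum_s \rho_{in}(s)} = \sqrt{k}\sqrt{\rho_{in}}$ by Cauchy–Schwarz, giving the $\sqrt{k}$ form.

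\textbf{Main obstacle.} The delicate point is justifying $\|\mu(W_s) - \mu_r\| \le \tfrac32\|\mu(W_s)-\mu_s\|$ from the \emph{pointwise} hypothesis $\|x-\mu_r\| \le \tfrac32\|x-\mu_s\|$: naively averaging norms only gives $\|\mu(W_s)-\mu_r\| \le \tfrac{3}{2|W_s|}\sum_x\|x-\mu_s\|$, and one must convert that to $\tfrac32\|\mu(W_s)-\mu_s\|$, which is false for a general point set (sum of norms vs. norm of sum). The resolution — and the step I expect to require care — is the line-projection argument: project everything onto the line through $\mu_r$ and $\mu_s$; on that one-dimensional picture the hypothesis "$x$ is closer to the $\mu_r$-side" is membership in a ray, which \emph{is} preserved under taking the (one-dimensional) centroid, and projection does not increase either distance inconsistently, so the averaged ratio bound holds for the projected centroid, hence for $\mu(W_s)$ itself since $\mu_r,\mu_s$ lie in every subspace containing that line. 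Getting the constant from $\tfrac23$ to $\tfrac32$ clean and tracking the denominator $|S_r|$ versus $n_r$ is the only real bookkeeping; everything else is Fact~\ref{appx_fct:dist_of_means_of_subsets} plus Cauchy–Schwarz.
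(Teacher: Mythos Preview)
Your overall plan is exactly the paper's: write $\mu(S_r)-\mu_r$ as a weighted sum over the piece $S_r\cap T_r$ and the pieces $W_s=S_r\cap T_s$, bound the $T_r$-part via Fact~\ref{appx_fct:dist_of_means_of_subsets}, replace $\|\mu(W_s)-\mu_r\|$ by $\tfrac32\|\mu(W_s)-\mu_s\|$, apply Fact~\ref{appx_fct:dist_of_means_of_subsets} again to each $W_s\subset T_s$, and finish with Cauchy--Schwarz to get the $\sqrt{k}$ form. The paper carries out precisely these steps, and it is equally terse at the point you flag: it simply writes
\[
\sum_{s\neq r}\rho_{in}(s)\,\|\mu(S_r\cap T_s)-\mu_r\|\ \le\ \tfrac32\sum_{s\neq r}\rho_{in}(s)\,\|\mu(S_r\cap T_s)-\mu_s\|
\]
with no further comment and then plugs in Fact~\ref{appx_fct:dist_of_means_of_subsets}. (On your side worry about $|S_r|$ versus $n_r$: the paper divides by $n_r$ in the very first line, not by $|S_r|$, so it is being just as loose about that constant as you suspected.)

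Where your proposal has a genuine error is in the justification you offer for that key step. The region $\{x:\|x-\mu_r\|\le\tfrac32\|x-\mu_s\|\}$ is the \emph{exterior} of an Apollonius ball and is not convex; in particular, projecting onto the line through $\mu_r$ and $\mu_s$ does \emph{not} turn the condition into ``membership in a ray.'' In one dimension, with $\mu_r=0$ and $\mu_s=1$, the set $\{t:|t|\le\tfrac32|t-1|\}$ is $(-\infty,\tfrac35]\cup[3,\infty)$, and the midpoint of $\tfrac35$ and $3$ violates the inequality. So neither convexity nor your line-projection argument yields $\|\mu(W_s)-\mu_r\|\le\tfrac32\|\mu(W_s)-\mu_s\|$ from the pointwise hypothesis alone. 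Your identification of this as the crux is correct; the paper's proof asserts it without argument, and your proposed ``ray'' fix does not work as written.
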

\begin{proof}
We break $\|\mu(S_r)-\mu_r\|$ into its components and deduce
\begin{eqnarray*}
& \|\mu(S_r)-\mu_r\| & \leq \frac {(1-\rho_{out})n_r}{n_r} \|\mu(S_r\cap T_r) - \mu_r\|  + \sum_{s\neq r} \frac {\rho_{in}(s)n_r}{n_r} \|\mu(S_r\cap T_s) - \mu_r\| \cr 
& & \leq \frac {(1-\rho_{out})n_r}{n_r} \|\mu(S_r\cap T_r) - \mu_r\|  + \tfrac 3 2 \sum_{s\neq r} \frac {\rho_{in}(s)n_r}{n_r} \|\mu(S_r\cap T_s) - \mu_s\| \cr
\end{eqnarray*}
Plugging in Fact~\ref{appx_fct:dist_of_means_of_subsets} we have $\|\mu(S_r)-\mu_r\| \leq \frac 1 {n_r} \left( \sqrt{\rho_{out}n_r} + \tfrac 3 2\sum_{s\neq r} \sqrt{\rho_{in}(s)n_r} \right) \|A-C\|$. The last inequality comes from maximizing the sum of square-roots by taking each $\rho_{in}(s) = \rho_{in}/k$.
\end{proof}

\end{document}